\documentclass[letterpaper]{article} % DO NOT CHANGE THIS
\usepackage[preprint]{aaai2027}
\usepackage[hyphens]{url}  % DO NOT CHANGE THIS
\usepackage{graphicx} % DO NOT CHANGE THIS
\usepackage{natbib}  % DO NOT CHANGE THIS AND DO NOT ADD ANY OPTIONS TO IT
\usepackage{caption} % DO NOT CHANGE THIS AND DO NOT ADD ANY OPTIONS TO IT
\usepackage{algorithm}
\usepackage{algorithmic}
\usepackage{newfloat}
\usepackage{listings}
\DeclareCaptionStyle{ruled}{labelfont=normalfont,labelsep=colon,strut=off} % DO NOT CHANGE THIS
\floatstyle{ruled}
\newfloat{listing}{tb}{lst}{}
\floatname{listing}{Listing}
\usepackage{booktabs}
\usepackage{amsmath}
\usepackage{amssymb}
\usepackage{amsfonts}
\usepackage{bm}
\newcommand{\R}{\mathbb{R}}
\newcommand{\M}{\mathcal{M}}
\newcommand{\E}{\mathbb{E}}
\newcommand{\Hd}{\mathcal{H}}
\newcommand{\Norm}[1]{\left\lVert#1\right\rVert}
\newcommand{\Jc}{J}
\newcommand{\cc}{\mathbf{c}}
\newcommand{\xv}{\mathbf{x}}
\newcommand{\yv}{\mathbf{y}}
\newcommand{\thv}{\bm{\theta}}

\usepackage{amsthm}
\theoremstyle{plain}
\newtheorem{theorem}{Theorem}
\newtheorem{proposition}{Proposition}
\newtheorem{lemma}{Lemma}
\newtheorem{corollary}{Corollary}
\theoremstyle{definition}
\newtheorem{assumption}{Assumption}
\newtheorem{definition}{Definition}

\title{The Right Measure for Physics-Constrained Generation:\\ A Co-Area Correction for Posterior-Consistent PDE Inverse Problems}
\author{
    Jian Xu\textsuperscript{\rm 1,2},
    Yanning Wu\textsuperscript{\rm 3},
    Delu Zeng\textsuperscript{\rm 3},
    John Paisley\textsuperscript{\rm 4},
    Qibin Zhao\textsuperscript{\rm 2}
}
\affiliations{
    \textsuperscript{\rm 1}RIKEN iTHEMS\quad
    \textsuperscript{\rm 2}RIKEN AIP\quad
    \textsuperscript{\rm 3}South China University of Technology\quad
    \textsuperscript{\rm 4}Columbia University\\
    \texttt{jian.xu@riken.jp}
}

\begin{document}

\maketitle

\begin{abstract}
Generative models---diffusion and flow matching---are increasingly used to solve partial differential equation (PDE) inverse problems, enforcing the governing physics as a \emph{hard constraint} (via projection or guidance) and reporting the resulting samples as a Bayesian posterior with calibrated uncertainty. We show that, used this way, the recipe can sample the wrong distribution. Conditioning a generative prior on a hard PDE constraint is conditioning on a measure-zero manifold---an operation that is intrinsically ambiguous (the Borel--Kolmogorov paradox) and whose physically correct resolution, the small-residual-noise limit, carries a \emph{co-area} (Fixman) Jacobian factor $[\det(\Jc\Jc^{\!\top})]^{-1/2}$ that projection- and guidance-based methods typically omit. We make the bias precise, show that it grows with the heterogeneity of the constraint sensitivity, and validate it on controlled problems against an \emph{i.i.d.} ground-truth arbiter. The omitted factor is not a second-order detail: removing it inflates the posterior error to $20\times$ the sampling-noise floor; minimal-displacement projection (as in PCFM) is biased at $9\times$ the floor; and a naive scalar reweighting does not fix it. We introduce \textbf{CoCoS}, a measure-aware constrained sampler that targets the correct co-area posterior, and show that it matches the gold-standard posterior to within sampling noise. Our results imply that ``satisfying the physics'' is not the same as ``sampling the posterior,'' and give a principled correction for uncertainty-aware scientific inference.
\end{abstract}

\section{Introduction}
Deep generative models have become a powerful tool for scientific inference. For systems governed by partial differential equations (PDEs), conditional diffusion and flow-matching models are now used to reconstruct fields and infer parameters from sparse, noisy observations, with the appealing promise of \emph{fast, amortized, uncertainty-aware} posterior sampling \citep{yang2021b,bastek2025physics,zang2025dgenno}. A central design question in this literature is how to make generated samples respect the governing physics. The dominant answers enforce the PDE (and conservation laws, boundary conditions, or data-consistency relations) as a \emph{hard constraint}: either by projecting intermediate or final samples onto the constraint set \citep{utkarsh2026physics,ligauge}, or by guiding the sampling dynamics with the constraint residual \citep{parikh2026d,kim2025flowdps,jiang2025ode}. Having enforced the constraint, these methods report the resulting samples as draws from the Bayesian posterior and use them for uncertainty quantification (UQ).

This paper questions the second step. \emph{Does enforcing the physical constraint actually produce samples from the posterior?} We argue, and demonstrate, that for an important class of methods the answer is no---not by a small amount, and not because of an implementation detail, but because of a measure-theoretic subtlety inherited by any method that enforces the constraint \emph{post hoc} and reports the result as a calibrated posterior.

The issue is the following. A hard PDE constraint $\cc(\xv)=\bm 0$ defines a lower-dimensional manifold $\M=\{\xv:\cc(\xv)=\bm 0\}$ in the state space, which has measure zero under any continuous prior. Conditioning a distribution on such an event is the classical Borel--Kolmogorov ambiguity \citep{tresor2025resolution}: the answer depends on \emph{how} the zero-measure limit is taken. The physically meaningful choice is dictated by the modeling assumption that the PDE holds only up to small residual noise---numerical discretization, model error, finite-precision physics---i.e.\ $\cc(\xv)\sim\mathcal N(\bm 0,\gamma^2 \mathbf I)$ with $\gamma\to 0$. By the co-area formula, this limit yields a conditional density on $\M$ that is \emph{not} the restriction of the prior, but the prior tilted by a Jacobian factor,
\begin{equation}
\label{eq:target}
p^\star(\xv)\;\propto\;\pi(\xv)\,\ell(\yv\mid\xv)\,\big[\det\!\big(\Jc(\xv)\Jc(\xv)^{\!\top}\big)\big]^{-1/2},\quad \xv\in\M,
\end{equation}
where $\Jc=\nabla\cc$ is the constraint Jacobian. Projection- and guidance-based methods target a \emph{different} zero-measure limit (the Euclidean / perpendicular tube), and therefore omit the factor $[\det(\Jc\Jc^{\!\top})]^{-1/2}$. The two limits coincide only when $\det(\Jc\Jc^{\!\top})$ is constant on $\M$---i.e.\ when the constraint sensitivity is spatially homogeneous, which it essentially never is for a PDE.

The factor in \eqref{eq:target} is well known in constrained molecular dynamics as the \emph{Fixman correction} \citep{fixman1974classical} and underlies exact sampling of densities on implicitly-defined manifolds \citep{lelievre2012langevin,zappa2018monte}; our contribution is to show that it is exactly what is missing from physics-constrained generative inference, to quantify the resulting bias, and to give a corrected sampler.

\paragraph{Contributions.}
\begin{itemize}
\item We identify and formalize a systematic bias in physics-constrained generative posteriors: enforcing a hard PDE constraint without the co-area/Fixman correction yields a miscalibrated posterior, by a factor that grows with the heterogeneity of the constraint sensitivity (Thm.~\ref{thm:coarea}, Cor.~\ref{cor:gap}, Prop.~\ref{prop:bias}).
\item Using an \emph{i.i.d.}\ rejection arbiter that is immune to the mixing pathologies that confound MCMC at small $\gamma$, we show on controlled problems that (i) the Fixman factor is \emph{necessary}---omitting it gives $20\times$ the noise floor; (ii) minimal-displacement projection is biased ($9\times$); and (iii) naive scalar reweighting is \emph{not} a reliable fix.
\item We introduce \textbf{CoCoS}, a measure-aware constrained sampler that provably targets $p^\star$, and show it matches the gold-standard posterior to within sampling noise.
\item We discuss when the bias is largest and is therefore most consequential for scientific UQ, and the implications for the recent wave of physics-constrained generative models.
\end{itemize}

\paragraph{Scope (what we do and do not claim).}
The critique is specific, and we are careful not to overstate it. It targets methods that enforce the hard constraint \emph{post hoc} on a fixed (pre)trained prior---minimal-displacement projection \citep{utkarsh2026physics} or residual guidance \citep{parikh2026d,kim2025flowdps}---and then \emph{report the constrained samples as a Bayesian posterior or use them for UQ}; for these, targeting the Euclidean rather than the residual limit makes the reported uncertainty miscalibrated (Sec.~\ref{sec:exp}). It does \emph{not} indict (i) methods that seek only constraint-\emph{satisfying generation} without a calibration claim, nor (ii) amortized inference trained on forward-simulated $(\thv,\yv)$ pairs \citep{sherki2025combining,zhai2025conditional}, which is measure-correct \emph{by construction} (Prop.~\ref{prop:amortize})---a distinction we make precise and then exploit in CoCo-Flow. Soft-penalty training \citep{bastek2025physics} targets a \emph{finite}-noise posterior; it is biased only insofar as that noise is finite, and we include it as the $\gamma{>}0$ reference point. The contribution is thus to delineate \emph{when} the bias occurs, quantify it, and correct it---not to claim every physics-constrained generator is wrong.

\begin{figure*}[t]
\centering
\includegraphics[width=0.96\textwidth]{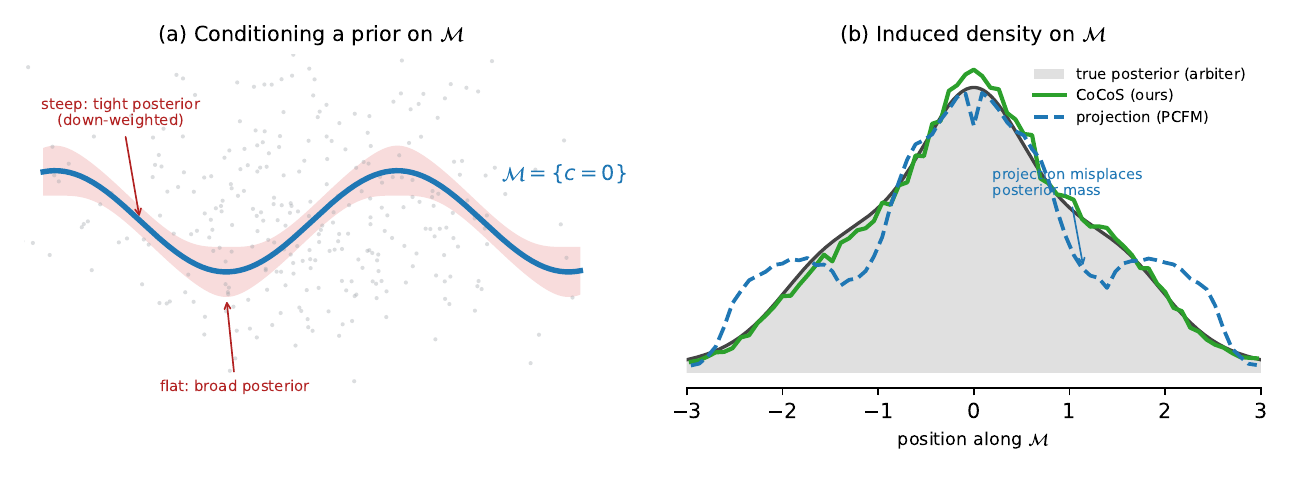}
\caption{\textbf{The measure problem, and CoCoS fixes it.} (a)~Conditioning a prior (gray cloud) on the hard-constraint manifold $\M$ yields a posterior whose uncertainty (red band) \emph{varies along} $\M$: tight where the constraint is sensitive (steep) and broad where it is flat---the co-area weighting $[\det(\Jc\Jc^{\!\top})]^{-1/2}$. (b)~Resulting density on $\M$ in a controlled $2$D example (constraint $y{=}f(x)$): CoCoS (green) recovers the true posterior (i.i.d.\ arbiter, shaded), whereas minimal-displacement projection (PCFM, blue dashed) systematically misplaces posterior mass---piling it where the manifold is flat and depleting the sensitive regions.}
\label{fig:method}
\end{figure*}

\section{Related Work}

\paragraph{Physics-constrained generative models.}
A growing body of work injects PDE structure into diffusion/flow models. Soft-penalty approaches add a residual loss \citep{bastek2025physics} and treat the physics--distribution tension as a multi-objective trade-off \citep{baldan2026physics}. Hard-constraint approaches enforce $\cc=\bm 0$ exactly: PCFM \citep{utkarsh2026physics} applies a minimal-displacement Gauss--Newton projection onto the tangent space of the constraint manifold at each sampling step, and gauge/convex-set methods \citep{ligauge} project onto structured feasible sets. All of these characterize ``distributional fidelity'' heuristically (e.g.\ optimal-transport alignment) and, as we show, none account for the co-area measure that conditioning on the constraint induces.

\paragraph{Posterior sampling for inverse problems.}
Training-free guidance methods steer a pretrained generative prior with the observation/PDE residual \citep{parikh2026d,kim2025flowdps,jiang2025ode}, including source-space variants \citep{wang2025source}, and acknowledge that aggressive guidance ``biases the sampling path.'' A complementary line amortizes inference by \emph{forward simulation}: conditional flow matching on joint $(\thv,\yv)$ pairs \citep{sherki2025combining,zhai2025conditional} and generative neural operators that learn the joint and invert it \citep{zang2025dgenno}. As we show (Prop.~\ref{prop:amortize}), this forward-simulation route is automatically measure-correct---it never conditions on the manifold---which sharply distinguishes it from the projection/guidance methods that post-process a fixed prior and incur the co-area bias; we make this target explicit and use it to correct the latter. Bayesian PINNs \citep{yang2021b} provide an MCMC gold standard but are slow.

\paragraph{Sampling on manifolds.}
Exact sampling of a prescribed density on an implicitly-defined manifold is well studied in computational statistics and molecular dynamics: the Fixman potential \citep{fixman1974classical}, constrained Langevin dynamics \citep{lelievre2012langevin}, and Metropolis schemes with reverse-projection checks \citep{zappa2018monte}. We import these tools into physics-constrained generative inference, where they have been absent.

\section{Background and Problem Setup}

\paragraph{PDE inverse problems.}
Let $\thv\in\R^d$ parameterize an unknown field (e.g.\ a log-coefficient field via a truncated basis), let $\mathcal G$ be a (differentiable) PDE solver mapping $\thv$ to a solution $u$, and let $\Hd$ be an observation operator. Sparse measurements are $\yv=\Hd(\mathcal G(\thv^\star))+\bm\eta$. Writing $\xv$ for the inference variables (here $\thv$, optionally augmented with $u$), the governing physics and/or the data-interpolation requirement define a constraint map $\cc:\R^n\to\R^m$ whose zero set
\begin{equation}
\M=\{\xv\in\R^n : \cc(\xv)=\bm 0\}
\end{equation}
is the feasible manifold; for a sparse-observation inverse problem a natural hard constraint is $\cc(\thv)=\Hd(\mathcal G(\thv))-\yv$.

\paragraph{Generative priors and constraint enforcement.}
A conditional flow-matching or diffusion model provides a (learned) prior $\pi$ over $\xv$ and a fast sampler. To respect the physics, hard-constraint methods post-process each sample to satisfy $\cc(\xv)=\bm 0$. The de facto standard \citep{utkarsh2026physics} is the minimal-displacement (Euclidean nearest-point) projection
\begin{equation}
\label{eq:proj}
\Pi(\xv_0)=\arg\min_{\xv}\;\tfrac12\Norm{\xv-\xv_0}^2 \;\;\text{s.t.}\;\; \cc(\xv)=\bm 0,
\end{equation}
implemented by Gauss--Newton steps $\xv\!\leftarrow\!\xv-\Jc^{\!\top}(\Jc\Jc^{\!\top})^{-1}\cc(\xv)$. The samples $\{\Pi(\xv_0^{(i)})\}$ are then reported as the posterior.

\section{Theory: The Measure of a Hard Constraint}
\label{sec:theory}
We make precise (i) what posterior the hard-constraint limit \emph{should} be, (ii) what projection/guidance methods actually target, and (iii) that the gap between them is exactly the Fixman factor. All proofs are in App.~\ref{app:proofs}.

\paragraph{Setup.}
\begin{assumption}\label{as:reg}
$\cc\in C^2(\R^n;\R^m)$; the prior $\pi$ has a continuous, strictly positive density in a neighborhood of $\M=\cc^{-1}(\bm 0)$; and the Jacobian $\Jc(\xv)=\nabla\cc(\xv)$ has full row rank $m$ for every $\xv\in\M$.
\end{assumption}
Under Assumption~\ref{as:reg}, $\M$ is a $C^2$ embedded submanifold of dimension $n-m$ carrying the intrinsic Hausdorff measure $\Hd^{\,n-m}$. Write $\mathrm G(\xv)=\Jc(\xv)\Jc(\xv)^{\!\top}\in\R^{m\times m}$ for the constraint Gram matrix, so $\det\mathrm G>0$ on $\M$; $\ell(\yv\mid\xv)$ is the (smooth, bounded) data likelihood.

\begin{lemma}[Co-area disintegration \citep{federer2014geometric}]\label{lem:coarea}
For integrable $f:\R^n\to\R$,
\begin{equation}
\int_{\R^n}\! f(\xv)\,d\xv=\int_{\R^m}\!\!\bigg(\int_{\cc^{-1}(\bm z)}\!\! \frac{f(\xv)}{\sqrt{\det\mathrm G(\xv)}}\,d\Hd^{\,n-m}(\xv)\bigg)d\bm z .
\end{equation}
\end{lemma}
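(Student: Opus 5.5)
The plan is to prove the identity on the open set $U=\{\xv:\operatorname{rank}\Jc(\xv)=m\}$, which contains $\M$ by Assumption~\ref{as:reg}. On $U$ the statement is the co-area formula of \citet{federer2014geometric} with the normal Jacobian $\sqrt{\det\mathrm G}=\sqrt{\det(\Jc\Jc^{\!\top})}$. I would take it as the classical result it is cited as, and only reduce it to the smooth case we need.

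Concretely, the general co-area formula for Lipschitz $\cc$ reads $\int_{\R^n} g(\xv)\,\sqrt{\det\mathrm G(\xv)}\,d\xv=\int_{\R^m}\int_{\cc^{-1}(\bm z)} g\,d\Hd^{\,n-m}\,d\bm z$. I would apply it with $g=f/\sqrt{\det\mathrm G}$. This substitution is legitimate wherever $\det\mathrm G>0$, so the honest statement is for $f$ vanishing off $U$, for instance $f$ supported in a neighborhood of $\M$, which is where the lemma is used later.

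To keep the argument self-contained in the $C^2$ setting, I would instead prove the formula directly in four steps.
\begin{enumerate}
\item By density and monotone convergence, it suffices to take $f$ continuous with compact support in $U$.
\item Cover $\operatorname{supp} f$ by finitely many charts and split $f$ with a partition of unity. On each chart, after permuting coordinates, write $\xv=(\bm u,\bm v)$ with $\bm v\in\R^m$ and $\partial_{\bm v}\cc$ invertible.
\item By the implicit function theorem, $\Phi(\bm u,\bm v)=(\bm u,\cc(\bm u,\bm v))$ is a local $C^2$ diffeomorphism. The change of variables $\bm z=\cc$ gives
\[
\int f\,d\xv=\int_{\R^m}\int f(\bm u,\bm v(\bm u,\bm z))\,\big|\det\partial_{\bm v}\cc\big|^{-1}\,d\bm u\,d\bm z .
\]
\item Identify the inner integral with the Hausdorff integral over the level set $\cc^{-1}(\bm z)$, parametrized by $\bm u\mapsto(\bm u,\bm v(\bm u,\bm z))$. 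The area element of this graph is $\sqrt{\det(\mathbf I+\mathbf A^{\!\top}\mathbf A)}\,d\bm u$ with $\mathbf A=-(\partial_{\bm v}\cc)^{-1}\partial_{\bm u}\cc$.
\end{enumerate}

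The main obstacle, and the one genuinely computational step, is the Jacobian identity needed in step 4:
\[
\big|\det\partial_{\bm v}\cc\big|^{-1}=\frac{\sqrt{\det(\mathbf I+\mathbf A^{\!\top}\mathbf A)}}{\sqrt{\det(\Jc\Jc^{\!\top})}} .
\]
I would prove it in three moves.
\begin{itemize}
\item Use $\det(\mathbf I_{n-m}+\mathbf A^{\!\top}\mathbf A)=\det(\mathbf I_m+\mathbf A\mathbf A^{\!\top})$ (Sylvester).
\item Factor $\Jc=\partial_{\bm v}\cc\,[-\mathbf A\;\;\mathbf I_m]$.
\item Conclude $\det(\Jc\Jc^{\!\top})=(\det\partial_{\bm v}\cc)^2\det(\mathbf I+\mathbf A\mathbf A^{\!\top})$, which rearranges to the identity.
\end{itemize}
Summing over the partition of unity then yields the lemma for $f$ supported in $U$.

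A remark should accompany the proof. If $f$ charges the critical set $\{\det\mathrm G=0\}$, the right-hand side misses that mass, because $\mathrm G$ degenerates there. By the general co-area formula, that set contributes nothing to the $\bm z$-integral of $\Hd^{\,n-m}$-measure with the $\sqrt{\det\mathrm G}$ weight, while $f/\sqrt{\det\mathrm G}$ is undefined on it. The lemma should therefore be read with $f$ supported in a neighborhood of $\M$ on which $\Jc$ keeps full rank, which is automatic near $\M$ by continuity of $\Jc$.
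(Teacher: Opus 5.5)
Your proposal is correct. Your first paragraph is the paper's proof: Federer's co-area formula for the submersion $\cc$ on a neighborhood of $\M$, with co-area factor $\sqrt{\det(\Jc\Jc^{\!\top})}$. The self-contained argument you commit to, however, is a genuinely different, elementary route.

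Instead of invoking geometric measure theory, you straighten $\cc$ locally with $\Phi(\bm u,\bm v)=(\bm u,\cc(\bm u,\bm v))$ and change variables. You then identify each chart-portion of $\cc^{-1}(\bm z)$ as a graph over $\bm u$ with area element $\sqrt{\det(\mathbf I+\mathbf A^{\!\top}\mathbf A)}$. Finally you close with $\det(\Jc\Jc^{\!\top})=(\det\partial_{\bm v}\cc)^2\det(\mathbf I+\mathbf A\mathbf A^{\!\top})$ plus Sylvester. All of this checks out.

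The citation buys generality (Lipschitz maps, critical points) in one line. Your route needs only $C^1$ and shows concretely where $[\det\mathrm G]^{-1/2}$ comes from. Two minor points:
\begin{itemize}
\item The local diffeomorphism in step 3 comes from the inverse, not the implicit, function theorem, with charts shrunk so that $\Phi$ is injective.
\item The reduction to continuous compactly supported $f$ in step 1 can be skipped. Change of variables and Tonelli apply directly to nonnegative measurable $f$ supported in $U$; then split $f=f^+-f^-$.
\end{itemize}

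Your closing caveat is sharper than the paper's statement. As written, for all integrable $f$ on $\R^n$, the lemma fails if $\{\det\mathrm G=0\}$ has positive Lebesgue measure and $f$ charges it, because Assumption~\ref{as:reg} gives full rank only near $\M$. For example, take $\cc(\xv)=\psi(x_1)$ with $\psi$ constant on an interval away from $0$. The paper's proof tacitly makes the same restriction (``hence on a neighborhood''). This matters downstream: Theorem~\ref{thm:coarea} applies the lemma to $f=\varphi\pi e^{-\Norm{\cc}^2/2\gamma^2}$, which is not supported near $\M$, so a localization step is needed there.
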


\begin{theorem}[The residual-noise limit is the co-area posterior]\label{thm:coarea}
Let $p_\gamma\propto\pi\,\ell(\yv\mid\cdot)\,\exp(-\Norm{\cc}^2/2\gamma^2)$ be the soft posterior at noise level $\gamma$. Under Assumption~\ref{as:reg}, $p_\gamma$ converges weakly as $\gamma\to0$ to the probability measure $p^\star$ supported on $\M$ with density w.r.t.\ $\Hd^{\,n-m}$ given by \eqref{eq:target}, i.e.\ $p^\star\propto\pi\,\ell(\yv\mid\cdot)\,[\det\mathrm G]^{-1/2}$.
\end{theorem}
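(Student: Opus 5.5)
The plan is to test $p_\gamma$ against an arbitrary bounded continuous $\varphi$ and apply Lemma~\ref{lem:coarea} to both numerator and denominator of
\[
\E_{p_\gamma}[\varphi]=\frac{\int \varphi\,\pi\,\ell\,e^{-\Norm{\cc}^2/2\gamma^2}\,d\xv}{\int \pi\,\ell\,e^{-\Norm{\cc}^2/2\gamma^2}\,d\xv}.
\]
Since the Gaussian factor depends on $\xv$ only through $\bm z=\cc(\xv)$, the co-area formula turns each integral into
\[
\int_{\R^m} e^{-\Norm{\bm z}^2/2\gamma^2}\,F_\varphi(\bm z)\,d\bm z,
\qquad
F_\varphi(\bm z)=\int_{\cc^{-1}(\bm z)}\varphi\,\pi\,\ell\,[\det\mathrm G]^{-1/2}\,d\Hd^{\,n-m}.
\]
After multiplying by $(2\pi\gamma^2)^{-m/2}$, both numerator and denominator become Gaussian mollifications of $F_\varphi$ and $F_1$ at $\bm z=\bm 0$. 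The theorem then reduces to two facts. First, $F_\varphi$ is continuous at $\bm 0$. Second, the mass from $\bm z$ away from $\bm 0$ vanishes. Granting both, $\E_{p_\gamma}[\varphi]\to F_\varphi(\bm 0)/F_1(\bm 0)=\E_{p^\star}[\varphi]$, which is weak convergence to \eqref{eq:target}. Note $F_1(\bm 0)>0$ because $\pi>0$ near $\M$, and we assume $\ell(\yv\mid\cdot)$ is positive somewhere on $\M$.

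\textbf{Step 1: continuity of $F_\varphi$ near $\bm 0$.} By Assumption~\ref{as:reg} and the implicit function theorem, $\cc$ is a submersion on a tubular neighborhood $U$ of $\M$. I would use the flow of the vector field $\Jc^{\!\top}\mathrm G^{-1}$ (in each coordinate direction of $\bm z$) to build a $C^1$ diffeomorphism $\Phi:\M\times B_\delta\to U$ with $\cc(\Phi(\xv,\bm z))=\bm z$; this is locally a trivialization. Pulling each fiber integral back to $\M$ gives an integrand $\varphi\,\pi\,\ell\,[\det\mathrm G]^{-1/2}\circ\Phi$ times a fiber area Jacobian. That Jacobian is continuous in $\bm z$ and equals $1$ at $\bm z=\bm 0$. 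Dominated convergence then gives continuity, provided $\M$ is compact, or more generally provided the integrands have an integrable envelope on $\M$.

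\textbf{Step 2: tail control.} For $\Norm{\bm z}\ge\delta$ the Gaussian weight is at most $e^{-\delta^2/2\gamma^2}$. After the $\gamma^{-m}$ normalization this contribution is bounded by $\gamma^{-m}e^{-\delta^2/2\gamma^2}\int\pi\ell\,d\xv\to0$, using integrability of $\pi\ell$ with $\ell$ bounded. The near-$\bm 0$ region is the standard approximate-identity argument.

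\textbf{Main obstacle.} The delicate point is Step~1 when $\M$ is noncompact. There the uniformity of the tubular neighborhood and the required dominating function are not automatic from Assumption~\ref{as:reg}, since full rank holds on $\M$ but $\det\mathrm G$ may degenerate at infinity. I would first handle compact $\M$, or add a mild uniform lower bound on $\det\mathrm G$ in a neighborhood of $\M$. I would then extend by truncation: restrict to $\M\cap B_R$, use tightness of $p_\gamma$ (from integrability of $\pi$) to discard mass outside $B_R$, and let $R\to\infty$.
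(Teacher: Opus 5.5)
Your route is the paper's: co-area disintegration, pulling the Gaussian factor out of the fiber integrals, rescaling $\bm z=\gamma\bm u$, and passing to the limit $F_\varphi(\bm 0)/F_1(\bm 0)$. The gap is that Steps~1 and~2 together do not cover the whole integral.

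\begin{itemize}
\item Step~2 removes only $\{\Norm{\cc}\ge\delta\}$.
\item Step~1 controls only $\cc^{-1}(\bm z)\cap U$, through the trivialization $\Phi$. But $F_\varphi(\bm z)$ integrates over the \emph{entire} fiber, and nothing in Assumption~\ref{as:reg} forces $\cc^{-1}(B_\delta)\subset U$.
\end{itemize}

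So points far from $\M$ with small residual are never treated. This happens even when $\M$ is compact, so ``first handle compact $\M$'' does not close it. Your proposed repair, tightness of $p_\gamma$ ``from integrability of $\pi$'', also fails. Since $Z_\gamma\asymp\gamma^m\to0$, you would need $\int_{\{\Norm{\cc}<\delta\}\setminus U}\pi\,\ell\,e^{-\Norm{\cc}^2/2\gamma^2}\,d\xv=o(\gamma^m)$, and integrability of $\pi$ gives no rate at all.

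In fact the statement is false under Assumption~\ref{as:reg} alone. Take $n=m=1$, $\cc(x)=xe^{-x^2}$, $\pi=\mathcal N(0,1)$ and $\ell\equiv1$. Then $\M=\{0\}$ is compact and $\cc'(0)=1$.
\begin{itemize}
\item On $[-1,1]$ you have $|\cc(x)|\ge|x|/e$, so the unnormalized mass there is at most $e\gamma$.
\item Let $R_\gamma$ be the larger root of $te^{-t^2}=\gamma$, so $R_\gamma\approx\sqrt{\log(1/\gamma)}$. For $|x|\ge R_\gamma$ you have $|\cc(x)|\le\gamma$, so the weight is at least $e^{-1/2}$. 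The prior mass of this region is of order $\gamma^{1/2}$ up to logarithms.
\end{itemize}
Hence $p_\gamma([-R,R])\to0$ for every fixed $R$: the mass escapes to infinity instead of converging to $\delta_0$. Equivalently, $F_1(z)$ blows up like $z^{-1/2}$, up to logarithms, because of the far root of $\cc(x)=z$.

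What is missing is an extra hypothesis. One option is $\inf\{\Norm{\cc(\xv)}:\mathrm{dist}(\xv,\M)\ge r\}>0$ for every $r>0$. Then, for small $\delta$, the set $\cc^{-1}(B_\delta)$ lies in the tubular neighbourhood where your trivialization parametrizes the whole fiber. For noncompact $\M$ you should also assume:
\begin{itemize}
\item a uniform tube on which $\det\mathrm G$ is bounded below;
\item an integrable envelope on $\M$, which also gives $F_1(\bm 0)<\infty$.
\end{itemize}
Under such hypotheses your Steps~1--2 complete the proof. The paper's proof makes the same leap: it asserts continuity of $F_\varphi$ at $\bm 0$ ``by Assumption~\ref{as:reg} and the implicit function theorem'', which is a purely local fact. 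So this is a gap in the theorem as stated, not one the paper resolves.
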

\noindent\emph{Proof idea.} Apply Lemma~\ref{lem:coarea} to the numerator of $p_\gamma$; after rescaling, the Gaussian factor in the residual variable $\bm z$ acts as an approximate identity concentrating at $\bm z=\bm 0$, leaving the surface integral over $\M$ weighted by $[\det\mathrm G]^{-1/2}$.

\paragraph{What projection and guidance target.}
\begin{definition}[Two conditioning limits]\label{def:limits}
The \emph{residual} limit conditions $\pi$ on the set $\{\Norm{\cc(\xv)}\le\varepsilon\}$; the \emph{Euclidean} limit conditions on the geometric tube $\mathcal T_\varepsilon=\{\xv:\mathrm{dist}(\xv,\M)\le\varepsilon\}$.
\end{definition}
\begin{proposition}[The Euclidean limit drops the Fixman factor]\label{prop:euclid}
Under Assumption~\ref{as:reg}, conditioning $\pi$ on $\mathcal T_\varepsilon$ converges weakly as $\varepsilon\to0$ to the measure $p^{\mathrm E}\propto\pi$ on $\M$ (density w.r.t.\ $\Hd^{\,n-m}$), carrying \emph{no} Jacobian factor.
\end{proposition}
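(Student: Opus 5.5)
The plan is to use tubular-neighborhood (normal) coordinates instead of the co-area formula in the residual variable. By Assumption~\ref{as:reg}, $\M$ is a $C^2$ embedded submanifold of dimension $n-m$. Hence for any compact piece $K\subset\M$ there is $\varepsilon_0>0$ such that the map
\[
\Phi:(\bm s,\bm v)\mapsto \bm s+\bm v,\qquad \bm s\in\M,\ \bm v\in N_{\bm s}\M,\ \Norm{\bm v}<\varepsilon_0,
\]
is a $C^1$ diffeomorphism onto a neighborhood of $K$. Under this map, $\mathrm{dist}(\bm s+\bm v,\M)=\Norm{\bm v}$, and the nearest-point projection is $\bm s$. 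The change-of-variables formula gives $d\xv=\det\big(I-\langle\bm v,\mathrm{II}_{\bm s}\rangle\big)\,d\Hd^{\,n-m}(\bm s)\,d\bm v$, where $\mathrm{II}$ is the second fundamental form. The only fact we need is that this Jacobian equals $1+O(\Norm{\bm v})$ uniformly on compacts; this uses only $C^2$ regularity. In particular, for a thin tube the volume element factorizes, to leading order, into the intrinsic measure on $\M$ times Lebesgue measure on the normal fibre, with no weighting.

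Next, fix a bounded continuous test function $\varphi$ and write
\[
\E[\varphi\mid\mathcal T_\varepsilon]=\frac{\int_{\mathcal T_\varepsilon}\varphi\,\pi\,d\xv}{\int_{\mathcal T_\varepsilon}\pi\,d\xv}.
\]
In normal coordinates the numerator becomes
\[
\int_{\M}\int_{\Norm{\bm v}\le\varepsilon}\varphi(\bm s+\bm v)\,\pi(\bm s+\bm v)\,\big(1+O(\varepsilon)\big)\,d\bm v\,d\Hd^{\,n-m}(\bm s).
\]
By continuity of $\varphi\pi$, this equals $\omega_m\varepsilon^m\big(\int_\M\varphi\,\pi\,d\Hd^{\,n-m}+o(1)\big)$, where $\omega_m$ is the volume of the unit $m$-ball. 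The denominator has the same form with $\varphi\equiv1$. The common factor $\omega_m\varepsilon^m$ cancels, so the ratio tends to $\int_\M\varphi\pi\,d\Hd^{\,n-m}/\int_\M\pi\,d\Hd^{\,n-m}$. This is weak convergence to $p^{\mathrm E}\propto\pi$. For contrast with Theorem~\ref{thm:coarea}: the residual set $\{\Norm{\cc}\le\varepsilon\}$ has normal half-width about $\varepsilon/\sigma(\mathrm G)$ in each direction, so its fibre volume is proportional to $\varepsilon^m[\det\mathrm G]^{-1/2}$. The geometric tube has fibre volume proportional to $\varepsilon^m$ uniformly, which is exactly why the Fixman factor disappears.

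The main obstacle is globalizing the argument when $\M$ is non-compact. In that case the tube may lack a uniform reach, and the $o(1)$ errors need not be uniform. I would first prove the result assuming $\M$ is compact, or restricting to compactly supported $\varphi$ together with an integrability hypothesis $\int_\M\pi\,d\Hd^{\,n-m}<\infty$. Tightness would then follow from a tail bound: the prior mass of $\mathcal T_\varepsilon$ outside a large compact set, divided by $\varepsilon^m$, should be small uniformly in $\varepsilon$. If positive reach fails globally, I would state the proposition for compact $\M$, or under a uniform-reach assumption, and note the extension by exhaustion. A secondary point is that the $o(1)$ from continuity of $\pi$ requires uniform continuity on the compact tube, which holds automatically there.
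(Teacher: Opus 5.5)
Your proposal is correct and follows the paper's own argument. Both pass to normal (tubular) coordinates and use the Weyl tube expansion $d\xv=(1+O(\varepsilon))\,d\Hd^{m}(\bm\nu)\,d\Hd^{\,n-m}(\bm p)$. Both then cancel the common fibre volume $\omega_m\varepsilon^m$ to obtain $p^{\mathrm E}\propto\pi\,d\Hd^{\,n-m}$.

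Your globalization caveat is well founded, not excess caution. The paper's proof takes $\varepsilon$ below the reach of $\M$ and treats the error term as uniform over $\M$. That tacitly assumes exactly the positive uniform reach and the tail/integrability control of $\pi$ near $\M$ that you propose to add. Assumption~\ref{as:reg} does not supply either of these when $\M$ is non-compact.
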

\begin{corollary}[The Fixman gap]\label{cor:gap}
$\dfrac{dp^\star}{dp^{\mathrm E}}(\xv)\propto[\det\mathrm G(\xv)]^{-1/2}$. The two limits coincide as probability measures iff $\det\mathrm G$ is $\Hd^{\,n-m}$-almost-everywhere constant on $\M$ (in particular, whenever $\cc$ is affine).
\end{corollary}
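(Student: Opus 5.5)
The corollary is obtained by comparing the two densities already identified in Theorem~\ref{thm:coarea} and Proposition~\ref{prop:euclid}. Both limits are absolutely continuous with respect to the same reference measure $\Hd^{\,n-m}$ restricted to $\M$. The residual limit has density
\[
p^\star = \pi\,\ell\,[\det\mathrm G]^{-1/2}/Z^\star ,
\]
and the Euclidean limit has density $p^{\mathrm E}=\pi\,\ell/Z^{\mathrm E}$. The likelihood is included in $p^{\mathrm E}$ by applying Proposition~\ref{prop:euclid} to the prior $\pi\ell$, which is continuous, positive and bounded near $\M$.

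\textbf{Radon--Nikodym derivative.} Since $\pi\ell>0$ on $\M$, the two measures are mutually absolutely continuous. The chain rule for Radon--Nikodym derivatives then gives
\[
\frac{dp^\star}{dp^{\mathrm E}}=\frac{Z^{\mathrm E}}{Z^\star}\,[\det\mathrm G]^{-1/2}
\]
$\Hd^{\,n-m}$-a.e.\ on $\M$. The normalizers $Z^\star$ and $Z^{\mathrm E}$ are finite and positive: this follows from positivity and continuity, and from local finiteness of $\Hd^{\,n-m}$ on $\M$, with integrability taken as implicit in Theorem~\ref{thm:coarea}. This proves the proportionality claim.

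\textbf{The equivalence.} The measures coincide iff their Radon--Nikodym derivative equals $1$ a.e. If $\det\mathrm G\equiv k$ a.e., the derivative is a.e.\ a constant $C$. Integrating $dp^\star = C\,dp^{\mathrm E}$ over $\M$ gives $1=C\cdot 1$, so $C=1$ and the measures are equal. Conversely, if $p^\star=p^{\mathrm E}$, the derivative is $1$ a.e. Hence $[\det\mathrm G]^{-1/2}=Z^\star/Z^{\mathrm E}$ a.e., so $\det\mathrm G$ is a.e.\ constant.

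\textbf{Affine case.} If $\cc(\xv)=A\xv+b$, then $\Jc\equiv A$ and $\det\mathrm G=\det(AA^\top)$ is constant. It is positive by the full-rank part of Assumption~\ref{as:reg}, so the two limits coincide.

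\textbf{Main obstacle.} The only delicate point is bookkeeping rather than analysis. One must make sure that both limits include the same factor $\ell$ and are taken with respect to the same reference measure $\Hd^{\,n-m}$, so that the ratio is exactly the Fixman factor. One must also ensure the normalizing constants are finite when $\M$ is noncompact. If needed, finiteness of $\int_\M \pi\ell\,d\Hd^{\,n-m}$ and of $\int_\M \pi\ell\,[\det\mathrm G]^{-1/2}\,d\Hd^{\,n-m}$ would be added as a standing hypothesis, since the preceding results already require it.
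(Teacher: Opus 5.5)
Your proposal is correct and follows essentially the same route as the paper's proof: both limits are densities with respect to $\Hd^{\,n-m}|_\M$, proportional to $\pi\ell\,[\det\mathrm G]^{-1/2}$ and $\pi\ell$, so the Radon--Nikodym derivative is proportional to $[\det\mathrm G]^{-1/2}$. Equality holds iff that factor is a.e.\ constant, and an affine $\cc$ gives a constant $\Jc$. Your extra bookkeeping makes explicit what the paper leaves implicit when it absorbs $\ell$ into $\pi$ and assumes positivity: applying Proposition~\ref{prop:euclid} to $\pi\ell$, mutual absolute continuity from $\pi\ell>0$ on $\M$, and finiteness of the normalizers.
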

Corollary~\ref{cor:gap} is the Borel--Kolmogorov ambiguity made quantitative: the two natural ways to ``condition on $\cc=\bm 0$'' differ by \emph{exactly} the Fixman factor $[\det\mathrm G]^{-1/2}$.

\begin{proposition}[Projection and guidance are biased]\label{prop:bias}
Minimal-displacement projection \eqref{eq:proj} transports $\pi$ along Euclidean normals; its pushforward $\Pi_\#\pi$ agrees with $p^{\mathrm E}$ to leading order in the prior's off-manifold spread, hence differs from $p^\star$ by the factor $[\det\mathrm G]^{-1/2}$ of Corollary~\ref{cor:gap}. Therefore any method reporting $\Pi_\#\pi$ (projection) or the constraint-satisfying Hausdorff restriction (residual-guidance without measure correction) is unbiased for $p^\star$ if and only if $\det\mathrm G$ is constant on the support of the posterior.
\end{proposition}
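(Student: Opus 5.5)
The plan is to compute the pushforward $\Pi_\#\pi$ in tubular coordinates around $\M$, and then read off the bias from Corollary~\ref{cor:gap}. Under Assumption~\ref{as:reg}, $\M$ is a $C^2$ embedded submanifold. So on some neighborhood $U$ of any compact piece of $\M$, the normal map $\Phi(\mathbf p,\mathbf v)=\mathbf p+\mathbf v$, with $\mathbf p\in\M$ and $\mathbf v\in N_{\mathbf p}\M=\mathrm{range}(\Jc(\mathbf p)^{\!\top})$, is a $C^1$ diffeomorphism. Inside this neighborhood the nearest-point problem \eqref{eq:proj} has a unique solution, and it is $\Pi(\Phi(\mathbf p,\mathbf v))=\mathbf p$. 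Gauss--Newton iterations started in $U$ converge to this same point, because the fixed-point conditions $\cc=\bm 0$ and $\xv_0-\xv\in\mathrm{range}(\Jc^{\!\top})$ are exactly the KKT conditions. This step makes precise the claim that $\Pi$ ``transports $\pi$ along Euclidean normals.''

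Next I change variables. The Jacobian of $\Phi$ is $\det(\mathbf I-|\mathbf v|\,\mathrm{II}_{\hat{\mathbf v}}(\mathbf p))=1+O(|\mathbf v|)$, where $\mathrm{II}$ denotes the second fundamental form. For a test function $\varphi$ this gives
\begin{equation*}
\int\varphi(\Pi(\xv))\,\pi(\xv)\,d\xv=\int_{\M}\varphi(\mathbf p)\Big(\int_{N_{\mathbf p}\M}\pi(\mathbf p+\mathbf v)\,(1+O(|\mathbf v|))\,d\mathbf v\Big)d\Hd^{\,n-m}(\mathbf p),
\end{equation*}
plus the contribution of mass lying outside $U$. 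The density of $\Pi_\#\pi$ with respect to $\Hd^{\,n-m}$ is therefore the normal-fibre integral $\rho(\mathbf p)=\int_{N_{\mathbf p}\M}\pi(\mathbf p+\mathbf v)\,d\mathbf v$. I then pass to the regime in which the prior's off-manifold spread $\sigma$ is small relative to the tube width. In that regime $\pi(\mathbf p+\mathbf v)\approx\pi(\mathbf p)\,g_\sigma(\mathbf v)$, with a fibre profile $g_\sigma$ of total mass independent of $\mathbf p$ to leading order, so $\rho(\mathbf p)\propto\pi(\mathbf p)(1+O(\sigma))$. This is the same measure as $p^{\mathrm E}$ in Proposition~\ref{prop:euclid}, which is that proposition's computation with the fibre ball of radius $\varepsilon$ replaced by the fibre profile. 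Any likelihood factor $\ell$ applied on $\M$ enters the same way. Combining with Corollary~\ref{cor:gap} gives $d(\Pi_\#\pi)/dp^\star\propto[\det\mathrm G]^{1/2}$ to leading order.

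For the guidance case, the method by definition reports $\pi\,\ell$ restricted to $\M$ with respect to $\Hd^{\,n-m}$, which is $p^{\mathrm E}$ exactly. The ``iff'' then follows from Corollary~\ref{cor:gap} applied on the support of the posterior. If $\det\mathrm G$ is constant there, the normalized measures agree. Conversely, if it is not constant, there are two sets of positive $p^\star$-measure on which $[\det\mathrm G]^{-1/2}$ takes separated values, and the Radon--Nikodym derivative cannot be a.e.\ equal to $1$ after normalization.

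The main obstacle is the ``to leading order'' step. Here one needs (a) control of the prior mass outside the injectivity tube $U$, and (b) the fibre-mass approximation, since a general prior need not factor as tangential times normal. The first thing I would try is a scaling assumption: take the prior to be $\pi(\xv)=\pi_{\M}(\Pi\xv)\,\sigma^{-m}g(\mathrm{dist}/\sigma)$-like near $\M$. Continuity of $\pi$ and dominated convergence then show $\rho/\pi\to$ const uniformly on compacts as $\sigma\to0$, and a Gaussian-type tail bound disposes of the mass outside $U$. I would state the proposition's ``agrees to leading order'' as exactly this limit.
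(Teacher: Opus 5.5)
Your proposal is correct and is essentially the paper's argument: nearest-point projection collapses Euclidean normal fibres, the tube factorization with Jacobian $1+O(|\mathbf v|)$ gives $\Pi_\#\pi=p^{\mathrm E}\,(1+O(\sigma))$, Corollary~\ref{cor:gap} supplies the Fixman factor, and residual guidance is taken to sample the Hausdorff law. Where you go beyond the paper is in stating the $\mathbf p$-independent normal-profile hypothesis that its ``off-manifold mass concentrated within reach'' uses silently, and that hypothesis is genuinely needed: a prior concentrated in residual space, $\pi\propto\pi_0\,e^{-\Norm{\cc}^2/2\sigma^2}$, has normal-fibre mass $\approx\pi_0\,(2\pi\sigma^2)^{m/2}[\det\mathrm G]^{-1/2}$ and therefore projects to $p^\star$, not $p^{\mathrm E}$.

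One side remark is false, though not load-bearing. Gauss--Newton steps along $\Jc(\xv_k)^{\!\top}$ at the current iterate, so every point of $\M$ is a fixed point and nothing enforces $\xv_0-\xv\in\mathrm{range}(\Jc(\xv)^{\!\top})$. Its limit therefore misses the true nearest point by $O(\mathrm{dist}^2)$; for example, $\cc(\xv)=x_2-x_1^2/2$ started from $\xv_0=(1,1)$ converges to about $(1.235,0.762)$ rather than $(2^{1/3},2^{-1/3})$. This is harmless at leading order, and the proposition concerns the argmin in \eqref{eq:proj} anyway.
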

For PDE inverse problems the sensitivity $\det\mathrm G$ is strongly heterogeneous (we measure $12$--$33\times$ variation across the posterior), so the bias is generic and non-negligible; Fig.~\ref{fig:mech} confirms the predicted $\propto\sqrt{\det\mathrm G}$ law directly from samples.

\paragraph{The correction.}
By Theorem~\ref{thm:coarea}, $p^\star\propto e^{-V}$ on $\M$ with the Boltzmann potential
\begin{equation}
\label{eq:fixman}
V(\xv)=\underbrace{-\log\pi(\xv)-\log\ell(\yv\mid\xv)}_{\text{prior}+\text{likelihood}}+\underbrace{\tfrac12\log\det\!\big(\Jc(\xv)\Jc(\xv)^{\!\top}\big)}_{\text{Fixman / co-area term}}.
\end{equation}
The last term is exactly the co-area weight that projection and guidance omit. By Proposition~\ref{prop:bias} it \emph{cannot} be reproduced by a single scalar reweighting of projected samples in general---$\Pi_\#\pi$ already distorts $\pi$ by a curvature-dependent, non-scalar pushforward---which we confirm empirically (scalar-reweighted PCFM is no better than PCFM; Table~\ref{tab:clean}).

\paragraph{Variational optimality.}
The target $p^\star$ is not an ad hoc reweighting: it is the solution of a well-posed variational problem. Let $\pi_y\propto\pi\,\ell(\yv\mid\cdot)$ and define the \emph{residual free energy}
\begin{equation}
\label{eq:freeenergy}
\mathcal F_\gamma(q)=\mathrm{KL}\!\left(q\,\|\,\pi_y\right)+\frac{1}{2\gamma^2}\,\E_{q}\!\big[\Norm{\cc(\xv)}^2\big]
\end{equation}
over probability measures $q$ on $\R^n$ with finite second moment.
\begin{proposition}[Variational optimality]\label{prop:variational}
Under Assumption~\ref{as:reg}: \emph{(i)} for every $\gamma>0$, $\mathcal F_\gamma$ has the unique minimizer $p_\gamma\propto\pi_y\exp(-\Norm{\cc}^2/2\gamma^2)$ (the soft posterior); and \emph{(ii)} $p_\gamma\Rightarrow p^\star$ as $\gamma\to0$. Hence $p^\star$ is the $\gamma\to0$ limit of the residual-penalized minimum-free-energy distributions---a variational characterization, not merely a change-of-variables formula.
\end{proposition}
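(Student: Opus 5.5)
The proof splits along the statement: part (i) is a Gibbs variational principle, and part (ii) is Theorem~\ref{thm:coarea}. For (i), fix $\gamma>0$ and let $Z_\gamma=\int \pi_y\exp(-\Norm{\cc}^2/2\gamma^2)\,d\xv$. This is finite because $\pi_y$ is integrable ($\ell$ is bounded and $\pi$ is a probability density) and the exponential factor is at most $1$. It is positive because $\pi$ is strictly positive near $\M$ and $\cc$ vanishes on $\M$. For any $q$ with $\mathrm{KL}(q\|\pi_y)<\infty$, $q$ has a density with respect to $\pi_y$, and I would expand
\begin{equation*}
\mathcal F_\gamma(q)=\E_q\!\Big[\log\tfrac{q}{\pi_y}+\tfrac{\Norm{\cc}^2}{2\gamma^2}\Big]
=\E_q\!\Big[\log\tfrac{q}{p_\gamma}\Big]-\log Z_\gamma
=\mathrm{KL}(q\,\|\,p_\gamma)-\log Z_\gamma .
\end{equation*}
Gibbs' inequality then gives $\mathcal F_\gamma(q)\ge-\log Z_\gamma$, with equality iff $q=p_\gamma$. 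If $\mathrm{KL}(q\|\pi_y)=\infty$, then $\mathcal F_\gamma(q)=\infty$ and $q$ is not a minimizer. Uniqueness follows from strict convexity of KL, or directly from the equality case of Gibbs' inequality.

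A technical point is that $\pi_y$ need not be normalized. I would either normalize it, which shifts $\mathcal F_\gamma$ by a constant and does not change the minimizer, or interpret the KL term as relative entropy with respect to a finite measure. The two choices agree up to that additive constant.

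I would also check that $p_\gamma$ lies in the admissible class, that is, has finite second moment. This needs a moment assumption on $\pi$, namely that $\pi_y$ has a finite second moment. I would add this explicitly or note it as implicit in the definition of the domain. Alternatively, the domain can be taken to be all probability measures, since the Gibbs argument does not use moments. The expansion of $\E_q[\log(q/\pi_y)+\ldots]$ as a sum is legitimate whenever both terms are bounded below, which they are here: the penalty is nonnegative, and the KL term is bounded below once $\pi_y$ is normalized.

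Part (ii) is exactly Theorem~\ref{thm:coarea} applied to $\pi_y=\pi\,\ell(\yv\mid\cdot)$. Its hypotheses hold because $\ell$ is smooth and bounded and Assumption~\ref{as:reg} is in force. So I would cite that theorem directly. For completeness I would recall its mechanism, which rests on Lemma~\ref{lem:coarea}. For a bounded continuous test function $\varphi$,
\begin{equation*}
\int\varphi\,\pi_y e^{-\Norm{\cc}^2/2\gamma^2}d\xv=\int_{\R^m}e^{-\Norm{\bm z}^2/2\gamma^2}\,g_\varphi(\bm z)\,d\bm z,\qquad g_\varphi(\bm z)=\int_{\cc^{-1}(\bm z)}\frac{\varphi\,\pi_y}{\sqrt{\det\mathrm G}}\,d\Hd^{\,n-m}.
\end{equation*}
After dividing by $(2\pi\gamma^2)^{m/2}$ and taking the ratio with the case $\varphi\equiv1$, weak convergence reduces to continuity of $g_\varphi$ at $\bm z=\bm 0$.

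The main obstacle is the behaviour of $g_\varphi$ near $\bm z=\bm 0$, together with the tails. Continuity near $\bm 0$ comes from the implicit function theorem: by full rank of $\Jc$, the level sets $\cc^{-1}(\bm z)$ are $C^2$ perturbations of $\M$ for small $\bm z$. I would use local charts and dominated convergence, and assume $\M$ is compact or that $\pi_y$ decays fast enough. The tail contribution from $\Norm{\bm z}\ge\delta$ is exponentially small relative to $\gamma^m$, since $\pi_y$ is integrable. Since the earlier theorem already packages this argument, the new content of the proposition is really part (i), and (ii) follows by combining the two.
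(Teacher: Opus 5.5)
Your proposal is correct and follows the paper's proof essentially verbatim: part (i) is the Gibbs variational identity $\mathcal F_\gamma(q)=\mathrm{KL}(q\,\|\,p_\gamma)-\log Z_\gamma$ with uniqueness from the equality case of $\mathrm{KL}\ge0$, and part (ii) is a direct citation of Theorem~\ref{thm:coarea}. Your extra checks are legitimate refinements the paper glosses over: you justify finiteness of $Z_\gamma$ by integrability of $\pi_y$ rather than by the paper's appeal to ``Gaussian-type tails,'' and you note that $p_\gamma$ must have a finite second moment to lie in the admissible class, which Assumption~\ref{as:reg} alone does not guarantee and without which the infimum $-\log Z_\gamma$ is not attained.
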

Minimal-displacement projection \eqref{eq:proj}, by contrast, solves the \emph{geometric} problem $\min_{\xv\in\M}\Norm{\xv-\xv_0}^2$: its objective is a transport cost in the \emph{ambient Euclidean} metric, which yields $p^\star$ only when $\det(\Jc\Jc^{\!\top})$ is constant (Cor.~\ref{cor:gap}). The two methods are optimal for \emph{different} objectives---Bayesian residual fidelity vs.\ Euclidean displacement.

\paragraph{Induced geometry (a natural-gradient analogy).}
The residual-noise model measures discrepancy in $\cc$-space, not $\xv$-space: to leading order it endows parameter space with the pullback metric $\Jc^{\!\top}\Jc$ rather than the Euclidean $\mathbf I$, and the co-area factor $[\det(\Jc\Jc^{\!\top})]^{-1/2}$ is exactly the volume element of this induced metric on $\M$. Just as the natural gradient replaces the arbitrary Euclidean parameter metric with the Fisher--Rao metric, the co-area correction replaces the arbitrary ambient-Euclidean conditioning (projection) with the residual-induced one; the projection bias is the price of conditioning in the wrong metric.

\section{CoCoS: Measure-Aware Constrained Sampling}
\label{sec:mcflow}

We target $p^\star$ directly with an exact constrained sampler that is agnostic to the base geometry. At a point $\xv\in\M$ with Jacobian $\Jc=\Jc(\xv)$:
(1) build the tangent projector $P_T=\mathbf I-\Jc^{\!\top}(\Jc\Jc^{\!\top})^{-1}\Jc$ and propose an isotropic tangent step $\bm v=P_T\,\bm\xi$, $\bm\xi\sim\mathcal N(\bm 0,s^2\mathbf I)$;
(2) project $\xv+\bm v$ back onto $\M$ along the fixed normal directions $\Jc^{\!\top}$ (a small Newton solve), giving a proposal $\yv$;
(3) accept with a Metropolis ratio using the co-area potential \eqref{eq:fixman} and a reverse-projection reversibility check, guaranteeing $p^\star$ as the invariant law \citep{zappa2018monte}. The Fixman term enters only through density \emph{values} (the $\log\det$), not gradients, so each step needs only the Jacobian $\Jc$ at the current and proposed points. Algorithm~\ref{alg:mcflow} summarizes the constrained step. Its correctness is guaranteed by the following.

\begin{theorem}[Invariance and consistency of CoCoS]\label{thm:mcflow}
Under Assumption~\ref{as:reg} and step size $s>0$, the CoCoS transition kernel of Alg.~\ref{alg:mcflow} is reversible with respect to $p^\star$ and therefore leaves $p^\star$ invariant. If, in addition, the kernel is $p^\star$-irreducible (which holds when $\M$ is connected and $s$ is small enough that the reverse-projection step in line~6 succeeds on a set of positive probability), then the empirical distribution of the chain converges weakly to $p^\star$ almost surely.
\end{theorem}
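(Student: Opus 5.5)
The plan follows the reversibility argument of \citet{zappa2018monte}, adapted to the co-area target $p^\star\propto e^{-V}$ of \eqref{eq:fixman}, with $V$ including the Fixman term. First, fix notation for one CoCoS step from $\xv\in\M$:
\begin{itemize}
\item draw $\bm v=P_T(\xv)\bm\xi$, a Gaussian on the tangent space $T_{\xv}\M$ with density $q(\bm v\mid\xv)\propto\exp(-\Norm{\bm v}^2/2s^2)$ w.r.t.\ $(n-m)$-dimensional Lebesgue measure on $T_{\xv}\M$;
\item solve for $\bm a\in\R^m$ with $\yv=\xv+\bm v+\Jc(\xv)^{\!\top}\bm a\in\M$;
\item compute the reverse tangent move $\bm v'=P_T(\yv)(\xv-\yv)$ and check that the reverse Newton solve from $\yv+\bm v'$ along $\Jc(\yv)^{\!\top}$ returns exactly $\xv$;
\item accept with probability
\[
\alpha(\xv,\yv)=\min\{1,\;e^{-V(\yv)+V(\xv)}\,q(\bm v'\mid\yv)/q(\bm v\mid\xv)\},
\]
and otherwise stay at $\xv$.
\end{itemize}
Because $\xv-\yv=-\bm v-\Jc(\xv)^{\!\top}\bm a$ and $\Jc(\yv)^{\!\top}$ spans the normal space at $\yv$, the decomposition of $\xv-\yv$ into a tangent part at $\yv$ and a normal part along $\Jc(\yv)^{\!\top}$ is unique. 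This makes the map $(\xv,\bm v)\mapsto(\yv,\bm v')$ a well-defined involution on the set of pairs where both the forward and reverse projections succeed, provided the solver's selected root is the one verified by the check.

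The core step is detailed balance, stated as follows. For bounded measurable $\phi$ on $\M\times\M$, the quantity
\[
\int\!\!\int\phi(\xv,\yv)\,p^\star(\xv)\,K(\xv,d\yv)\,d\Hd^{\,n-m}(\xv)
\]
is symmetric in the two arguments of $\phi$. Rejections contribute diagonal terms, which are trivially symmetric. For accepted moves I write the joint law of $(\xv,\bm v)$ as $p^\star(\xv)q(\bm v\mid\xv)$ with respect to $\Hd^{\,n-m}\otimes\mathrm{Leb}_{T_{\xv}\M}$, which is the Liouville-type measure on the tangent bundle. The key geometric lemma, taken from \citet{zappa2018monte}, is that the involution $\Psi:(\xv,\bm v)\mapsto(\yv,\bm v')$ preserves this product measure. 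I expect this Jacobian computation to be the main obstacle. My first attempt would parametrize a neighborhood by the implicit function theorem, using the $C^2$ regularity and full rank from Assumption~\ref{as:reg}, and show that the Jacobian determinant of $\Psi$ factors as a forward factor times its reverse counterpart. These cancel, since the factor is $\det(\Jc(\xv)^{\!\top}$-normal component$\cdot\Jc(\yv))$-type terms symmetric under swapping $\xv$ and $\yv$, so $|\det D\Psi|=1$.

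Given that lemma, the change of variables $(\xv,\bm v)\to\Psi(\xv,\bm v)$ together with the standard identity
\[
p^\star(\xv)q(\bm v\mid\xv)\,\alpha(\xv,\yv)=p^\star(\yv)q(\bm v'\mid\yv)\,\alpha(\yv,\xv)
\]
yields symmetry. The reverse-projection check is essential here. Without it, some forward moves would have no matching reverse move within the domain of $\Psi$, and the symmetry would fail. Restricting to pairs where both projections succeed makes $\Psi$ a bijection of that domain onto itself. Invariance of $p^\star$ then follows by taking $\phi(\xv,\yv)=g(\yv)$. The Fixman term plays no special role in this argument: any continuous positive density with respect to $\Hd^{\,n-m}$ works, and Theorem~\ref{thm:coarea} is used only to identify that density as $e^{-V}$.

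For consistency, I would invoke the ergodic theorem for Harris-recurrent Markov chains (e.g.\ Meyn--Tweedie, or Tierney's results for Metropolis--Hastings kernels). The kernel has an invariant probability measure $p^\star$ and, by hypothesis, is $p^\star$-irreducible. It is also aperiodic, because rejection gives positive holding probability, or because the proposal density is positive on a neighborhood of $\xv$. Harris recurrence follows as in Tierney's Corollary 2: a Metropolis kernel that is irreducible with respect to its target is Harris recurrent. The ergodic theorem then gives $\frac1N\sum_k g(\xv_k)\to\int g\,dp^\star$ almost surely for every bounded continuous $g$ and every starting point. Applying this to a countable convergence-determining class of such $g$ gives almost-sure weak convergence of the empirical measure.
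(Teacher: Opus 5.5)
Your proposal is correct and follows the same route as the paper. It proves reversibility for a generic surface density $e^{-V}\,d\Hd^{\,n-m}$ via the tangent-Gaussian, normal-reprojection and reverse-check argument of \citet{zappa2018monte}, identifies $e^{-V}$ with $p^\star$ through Theorem~\ref{thm:coarea}, and then goes from irreducibility to Harris recurrence to the ergodic theorem. You also make explicit two points the paper leaves inside its citations: the proposal Jacobian $|\det(T_{\xv}^{\top}T_{\yv})|$ (with $T_{\xv},T_{\yv}$ orthonormal tangent bases) is symmetric in $\xv,\yv$ and so cancels as a ratio of forward and reverse factors, not as a product as you phrased it; and Tierney's corollary supplies Harris recurrence for an irreducible Metropolis kernel.
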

\noindent\emph{Proof idea.} The tangent-Gaussian proposal followed by normal re-projection, together with the reverse-projection check (line~6), defines a kernel that is reversible for the surface measure $e^{-V}\,d\Hd^{\,n-m}$ for \emph{any} smooth potential $V$ \citep{zappa2018monte}; taking $V$ as in \eqref{eq:fixman} makes $e^{-V}d\Hd^{\,n-m}=p^\star$ by Theorem~\ref{thm:coarea}. Reversibility plus irreducibility yield the ergodic theorem. Crucially, the only target-specific ingredient is the $\log\det\mathrm G$ term in the acceptance ratio---omitting it (lines~7 with $V=-\log\pi-\log\ell$) yields a kernel reversible for $p^{\mathrm E}$, the biased Hausdorff law of Prop.~\ref{prop:euclid}.

\paragraph{Amortizing the correction (CoCo-Flow).}
The constrained sampler is exact but pays an MCMC cost per query. Because the correction lives entirely in the \emph{target measure}, it can be moved to training time and amortized. We first note a clarifying fact: if a differentiable simulator is available, drawing joint pairs $(\thv_i,\yv_i)$ with $\thv_i\sim\pi$, $\yv_i=\Hd(\mathcal G(\thv_i))+\bm\eta$ and fitting a conditional flow $q_\phi(\thv\mid\yv)$ by flow matching yields, as the noise level $\gamma\to0$, exactly the co-area posterior $p^\star(\cdot\mid\yv)$---the joint draws respect the residual-tube measure by construction. The bias studied here arises specifically because projection/guidance/soft-penalty methods do \emph{not} simulate the joint; they post-process a fixed (often pretrained) prior, which is where the measure is distorted. In that practically important regime---a fixed prior $\pi$ and a test-time constraint---we amortize by \emph{distilling} the exact sampler: run Alg.~\ref{alg:mcflow} offline to produce teacher samples $\{\thv_i\sim p^\star(\cdot\mid\yv_j)\}$ over a distribution of constraints $\yv_j$, and train $q_\phi(\thv\mid\yv)$ by conditional flow matching to reproduce them. Test-time inference is a single ODE solve---no Jacobian, no $\log\det$, no MCMC---so the co-area cost is paid once and amortized across unlimited queries. Equivalently, when the projection pushforward Jacobian is tractable one may skip MCMC and train directly against $p^\star$ by \emph{co-area-reweighted} flow matching, attaching to each projected anchor $\Pi(\xv_0^{(i)})$ a self-normalized weight $w_i\propto \mathrm{d}p^\star/\mathrm{d}(\Pi_\#\pi)$ whose dominant factor is the Fixman term $[\det(\Jc\Jc^{\!\top})]^{-1/2}$. Either route produces a fast amortized generator that targets the \emph{measure-correct} posterior, whereas existing amortized constrained generators inherit the bias of their (projected or soft-penalized) training data. This is made precise by:
\begin{proposition}[Amortization preserves the target]\label{prop:amortize}
\emph{(i)} If $(\thv,\yv)$ are drawn jointly with $\thv\sim\pi$ and $\yv=\Hd(\mathcal G(\thv))+\bm\eta$, $\bm\eta\sim\mathcal N(\bm0,\gamma^2\mathbf I)$, then the conditional law of $\thv$ given $\yv$ converges weakly as $\gamma\to0$ to $p^\star(\cdot\mid\yv)$. \emph{(ii)} A conditional flow-matching student trained to reproduce teacher samples from $p^\star$ has $p^\star$ as its unique population optimum.
\end{proposition}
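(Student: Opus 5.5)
For part (i), the plan is to reduce the statement to Theorem~\ref{thm:coarea}. The constraint map here is $\cc(\thv)=\Hd(\mathcal G(\thv))-\yv$, and its zero set is $\M_{\yv}$. Under the joint model $\thv\sim\pi$, $\yv=\Hd(\mathcal G(\thv))+\bm\eta$ with $\bm\eta\sim\mathcal N(\bm0,\gamma^2\mathbf I)$, Bayes' rule gives the conditional density of $\thv$ given $\yv$ as
\[
p(\thv\mid\yv)\;\propto\;\pi(\thv)\exp\!\big(-\Norm{\cc(\thv)}^2/2\gamma^2\big),
\]
since the Gaussian normalizing constant $(2\pi\gamma^2)^{-m/2}$ does not depend on $\thv$. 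This is exactly the soft posterior $p_\gamma$ of Theorem~\ref{thm:coarea}, with the data likelihood absorbed into the residual term (equivalently, $\ell\equiv 1$). I will check that Assumption~\ref{as:reg} holds for this $\cc$ at the given $\yv$. This needs $\Hd\circ\mathcal G\in C^2$ and full row rank of $\nabla(\Hd\circ\mathcal G)$ on $\M_{\yv}$, which I will state as the standing hypothesis. Theorem~\ref{thm:coarea} then gives weak convergence to the measure on $\M_{\yv}$ with density proportional to $\pi\,[\det\mathrm G]^{-1/2}$ with respect to $\Hd^{\,n-m}$, which is $p^\star(\cdot\mid\yv)$.

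One subtlety needs attention. If the paper's $p^\star(\cdot\mid\yv)$ carries an additional likelihood $\ell$ from a separate noisy channel, I would split the observations into a hard part, which enters through $\cc$, and a soft part, which enters through $\ell$. Both pass through the same computation unchanged. I expect this bookkeeping to be routine.

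For part (ii), I will use the standard characterization of conditional flow matching. Fix the teacher law $p^\star(\cdot\mid\yv)$ and a reference $\thv_0\sim\mathcal N(0,\mathbf I)$, and use the linear interpolant $\thv_t=(1-t)\thv_0+t\thv_1$ with $\thv_1\sim p^\star$. The CFM loss $\E\Norm{v_\phi(\thv_t,t,\yv)-(\thv_1-\thv_0)}^2$ equals the marginal FM loss plus a constant independent of $\phi$. Its population minimizer over all measurable fields is therefore the conditional expectation $v^\star(\thv,t,\yv)=\E[\thv_1-\thv_0\mid\thv_t=\thv,\yv]$; this minimizer is unique $L^2$-a.e. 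The marginal path $p_t$ satisfies the continuity equation with $v^\star$ because this identity holds for each conditional path. So if the ODE with field $v^\star$ is well-posed (a Lipschitz/integrability condition I will assume), its time-1 pushforward of the reference is $p_1=p^\star(\cdot\mid\yv)$. Uniqueness of the optimum as a distribution then follows from uniqueness of $v^\star$ together with uniqueness of solutions to the continuity equation under that regularity.

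The main obstacle is that $p^\star$ is singular, being supported on the $(n-m)$-dimensional manifold $\M_{\yv}$. As a result, $v^\star$ may blow up as $t\to1$, and ODE well-posedness can fail exactly at the endpoint. I would handle this by proving the pushforward identity on $[0,1-\delta]$ and letting $\delta\to0$. The interpolant $\thv_{1-\delta}$ converges to $\thv_1$ in $L^2$, so the laws $p_{1-\delta}$ converge weakly to $p^\star$. The statement is then understood as the limiting law of the optimal flow, which is consistent with the weak-convergence framing of part (i).
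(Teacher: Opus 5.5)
Your proposal is correct and follows the paper's proof. In (i) you identify $\thv\mid\yv$ via Bayes' rule with the soft posterior $p_\gamma$ for $\cc(\thv)=\Hd(\mathcal G(\thv))-\yv$ and apply Theorem~\ref{thm:coarea}, and in (ii) you give the standard conditional-flow-matching consistency argument that the paper states in one line; your extra steps (checking Assumption~\ref{as:reg} for $\Hd\circ\mathcal G$, and truncating to $[0,1-\delta]$ to handle the singular endpoint of $p^\star$) fill in details the paper leaves implicit rather than changing the route.
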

\noindent Part (i) is Theorem~\ref{thm:coarea} applied to $\cc(\thv)=\Hd(\mathcal G(\thv))-\yv$ and explains why forward-simulation-based amortized flow matching \citep{sherki2025combining,zhai2025conditional} is automatically measure-correct, while post-hoc constraint enforcement on a fixed prior is not; part (ii) is the standard consistency of flow matching as a proper objective. A full empirical study of CoCo-Flow is the natural next step; here we validate the correction itself with the exact sampler.

\begin{algorithm}[t]
\caption{CoCoS constrained step (codimension $m$)}
\label{alg:mcflow}
\begin{algorithmic}[1]
\STATE \textbf{input} $\xv\in\M$, step $s$; \textbf{target} $p^\star\propto e^{-V}$, Eq.~\eqref{eq:fixman}
\STATE $\Jc\!\leftarrow\!\nabla\cc(\xv)$; $P_T\!\leftarrow\!\mathbf I-\Jc^{\!\top}(\Jc\Jc^{\!\top})^{-1}\Jc$
\STATE $\bm v\!\leftarrow\!P_T\,\bm\xi$, \;$\bm\xi\sim\mathcal N(\bm 0,s^2\mathbf I)$
\STATE $\yv\!\leftarrow\!$ project $\xv+\bm v$ onto $\M$ along $\Jc^{\!\top}$ \COMMENT{Newton}
\STATE $\Jc'\!\leftarrow\!\nabla\cc(\yv)$; $\bm v'\!\leftarrow\!(\mathbf I-\Jc'^{\top}(\Jc'\Jc'^{\top})^{-1}\Jc')(\xv-\yv)$
\STATE reversibility: project $\yv+\bm v'$ along $\Jc'^{\top}$; require it returns $\xv$
\STATE $\log r\!\leftarrow\!V(\xv)-V(\yv)+\tfrac{1}{2s^2}(\Norm{\bm v}^2-\Norm{\bm v'}^2)$
\STATE accept $\yv$ w.p.\ $\min(1,e^{\log r})$ if both projections converged
\end{algorithmic}
\end{algorithm}

\section{Experiments}
\label{sec:exp}

Our goal is to test the central claim cleanly: \emph{is the co-area/Fixman correction the right target, and do existing physics-constrained samplers deviate from it?} We isolate this from confounds (sampler mixing, the $\gamma\to 0$ gap) using an \emph{i.i.d.}\ ground-truth arbiter.

\paragraph{Arbiter.}
The unambiguous definition of ``prior conditioned on $\cc\approx\bm 0$'' is residual-band rejection: draw $\xv\sim\pi$ and keep those with $\Norm{\cc(\xv)}<\varepsilon$. This is \emph{i.i.d.}, free of mixing pathologies, and converges to $p^\star$ as $\varepsilon\to 0$. We verify band-convergence ($\varepsilon{=}0.02$ vs.\ $0.01$) and report a tight noise floor from two independent halves.

\paragraph{Controlled benchmark.}
We use a $d{=}4$ problem with a single nonlinear (quadratic) constraint whose sensitivity $\Norm{\nabla\cc}$ varies $4.2\times$ over the posterior, a standard normal prior, and exact analytic gradients (no PDE-solver or autodiff confounds). We compare, against the rejection arbiter: the exact constrained sampler \emph{with} the Fixman term (CoCoS), the same sampler \emph{without} it (Hausdorff measure $\propto\pi$), minimal-displacement projection (PCFM), and PCFM with a scalar $[\det\Jc\Jc^{\!\top}]^{-1/2}$ reweighting. We report the average per-coordinate 1-Wasserstein distance to the arbiter.

\begin{table}[t]
\centering
\caption{Controlled $d{=}4$ benchmark. Average 1-Wasserstein distance to the \emph{i.i.d.}\ rejection arbiter (lower is better); the noise floor is the arbiter's own two-sample distance. CoCoS (with the Fixman term) matches the gold standard; omitting it, projecting, or scalar-reweighting are all biased.}
\label{tab:clean}
\begin{tabular}{lcc}
\toprule
Method & $\overline{W}_1\!\downarrow$ & $\times$ floor \\
\midrule
Rejection noise floor (gold) & 0.004 & 1.0 \\
\textbf{CoCoS} (co-area/Fixman) & \textbf{0.010} & \textbf{2.5} \\
Constrained, no Fixman (Hausdorff) & 0.087 & 21 \\
PCFM projection \citep{utkarsh2026physics} & 0.037 & 9 \\
PCFM + scalar co-area weight & 0.066 & 16 \\
\bottomrule
\end{tabular}
\end{table}

\paragraph{Findings (Table~\ref{tab:clean}).}
(i) \emph{The Fixman factor is the correct target.} CoCoS matches the arbiter to within the sampling-noise floor ($0.010$ vs.\ $0.004$), confirming that the $\gamma\to 0$ hard-constraint posterior is the co-area density \eqref{eq:target} and that the constrained sampler realizes it.
(ii) \emph{It is necessary.} Dropping the Fixman term (sampling the Hausdorff measure) is off by $21\times$ the floor.
(iii) \emph{Projection is biased.} PCFM deviates at $9\times$ the floor.
(iv) \emph{Scalar fixes are unreliable.} Post-hoc reweighting of projected samples does not recover the target---here it makes matters worse---because the projection pushforward is a curvature-dependent distortion, not a pure $\Norm{\nabla\cc}$ rescaling. An exact measure-aware sampler is required.

We additionally found that an MCMC ``gold standard'' (HMC on the soft posterior at small $\gamma$) is \emph{unreliable} as an arbiter: it fails to mix in the stiff small-$\gamma$ regime, masquerading as agreement under loose tolerances and as disagreement under tight ones. Only the i.i.d.\ arbiter resolves the question---a methodological point for the broader literature on evaluating physics-constrained posteriors.

\paragraph{PDE inverse problem (Darcy flow).}
We next evaluate on a $1$D Darcy inverse problem: an unknown log-permeability field is reconstructed from $m{=}3$ sparse pressure observations through a differentiable finite-difference solver, giving a codimension-$3$ feasible manifold with strongly heterogeneous sensitivity ($\det(\Jc\Jc^{\!\top})^{1/2}$ varies by more than an order of magnitude across the posterior). We use $d{=}8$ basis coefficients (three independent problem instances) and a higher-dimensional $d{=}16$ instance to probe scaling. Because the residual-band arbiter must approach the $\gamma\to0$ limit, we verified band$\,\to\,0$ convergence of the gold standard and use a tight band ($\varepsilon{=}0.003$) as the arbiter; a looser band is itself biased and would understate accuracy. Results are in Table~\ref{tab:darcy}.
\begin{table*}[t]
\centering
\caption{Darcy inverse (codim $3$). Average per-coordinate $W_1$ to the i.i.d.\ rejection arbiter (lower is better); $\times$floor is relative to the arbiter's own two-sample noise floor. $d{=}8$ is mean$\pm$std over three problem instances; $d{=}16$ is a single instance. CoCoS stays within a small multiple of the floor and \emph{does not degrade with dimension}; dropping the Fixman term collapses to the Hausdorff measure and projection (PCFM) is badly biased.}
\label{tab:darcy}
\begin{tabular}{lcccc}
\toprule
 & \multicolumn{2}{c}{$d{=}8$ (3 seeds)} & \multicolumn{2}{c}{$d{=}16$ (3 seeds)} \\
\cmidrule(lr){2-3}\cmidrule(lr){4-5}
Method & $\overline{W}_1\!\downarrow$ & $\times$fl & $\overline{W}_1\!\downarrow$ & $\times$fl \\
\midrule
Rejection floor (gold) & $0.009$ & $1.0$ & $0.012$ & $1.0$ \\
\textbf{CoCoS} (Fixman) & $\mathbf{0.021{\pm}0.011}$ & $\mathbf{2.3}$ & $\mathbf{0.014{\pm}0.004}$ & $\mathbf{1.2}$ \\
No Fixman (Hausdorff) & $0.072{\pm}0.020$ & $8.3$ & $0.043{\pm}0.012$ & $3.5$ \\
PCFM projection \citep{utkarsh2026physics} & $0.31{\pm}0.11$ & ${\sim}36$ & $0.19{\pm}0.07$ & ${\sim}16$ \\
\bottomrule
\end{tabular}
\end{table*}

\paragraph{Findings (Table~\ref{tab:darcy}).}
The ordering CoCoS $\ll$ no-Fixman $\ll$ PCFM is invariant across all three $d{=}8$ instances and all three $d{=}16$ instances. (i) \emph{The correction scales}: CoCoS stays at $1.2$--$3.1\times$ the floor and is, if anything, \emph{cleaner} at $d{=}16$ ($1.2\times$) than at $d{=}8$, since the higher-dimensional tangent space mixes more uniformly. (ii) \emph{Sampler correctness}: chains started from the gold standard and from the (far) PCFM distribution converge to the same value (e.g.\ $0.016$ vs.\ $0.015$ at $d{=}16$), confirming the constrained sampler's stationary law is the co-area posterior, independent of initialization. (iii) \emph{Both failure modes persist on a real PDE}: omitting the Fixman term is $3$--$8\times$ the floor, and projection is $9$--$45\times$---the bias is largest precisely in the high-dimensional, heterogeneous-sensitivity regime relevant to scientific UQ.

\paragraph{The effect spans projection, guidance, and soft-penalty methods---and surfaces in the UQ (Table~\ref{tab:uq}).}
On a representative $d{=}8$ instance we compare, against the same arbiter, methods from each paradigm: projection (PCFM), inference-time guidance (a guided Langevin sampler of the soft posterior, the mechanism shared by D-Flow / DiffusionPDE \citep{parikh2026d}), and a soft-penalty / finite-noise posterior ($\gamma{=}0.02$, the target of PBFM/PIDM-style training \citep{bastek2025physics,baldan2026physics}). All deviate from the true posterior (up to ${\sim}40\times$ the floor), and crucially the deviation is a \emph{calibration} failure, not a cosmetic one: averaged over the three instances, PCFM's posterior standard deviation is wrong by $\mathbf{96\%}$ (and guidance by $56\%$), with miscalibrated $90\%$ intervals (coverage $0.95$/$0.93$), while CoCoS matches the posterior spread to ${\sim}1\%$ and is calibrated at $0.89$. For uncertainty-aware inference this is the difference between trustworthy and misleading error bars (Fig.~\ref{fig:uqbars}). The full posterior \emph{covariance} tells the same story (Fig.~\ref{fig:cov}): CoCoS reproduces $\Sigma$ to $5\%$ (Frobenius), no-Fixman to $10\%$, while PCFM is off by $\mathbf{122\%}$---it invents correlations the posterior does not have.

\begin{figure*}[t]
\centering
\includegraphics[width=0.96\textwidth]{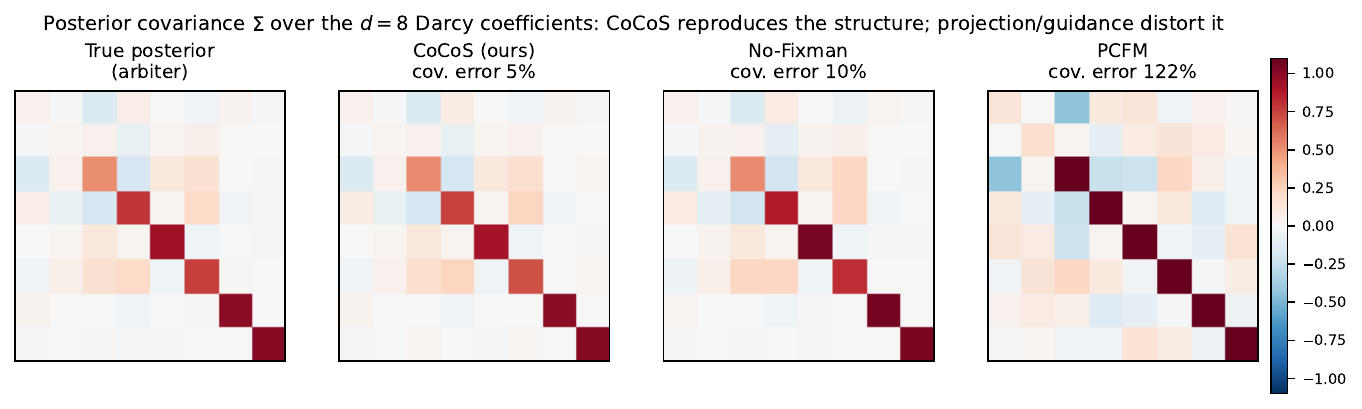}
\caption{\textbf{Posterior covariance $\Sigma$ over the $d{=}8$ Darcy coefficients} (shared colour scale). CoCoS reproduces the arbiter's covariance structure (Frobenius error $5\%$); no-Fixman is close ($10\%$); minimal-displacement projection (PCFM) distorts it badly ($122\%$), manufacturing spurious correlations and inflated variances. The bias is a \emph{covariance}-level error, not just a marginal one.}
\label{fig:cov}
\end{figure*}

\begin{figure}[t]
\centering
\includegraphics[width=\columnwidth]{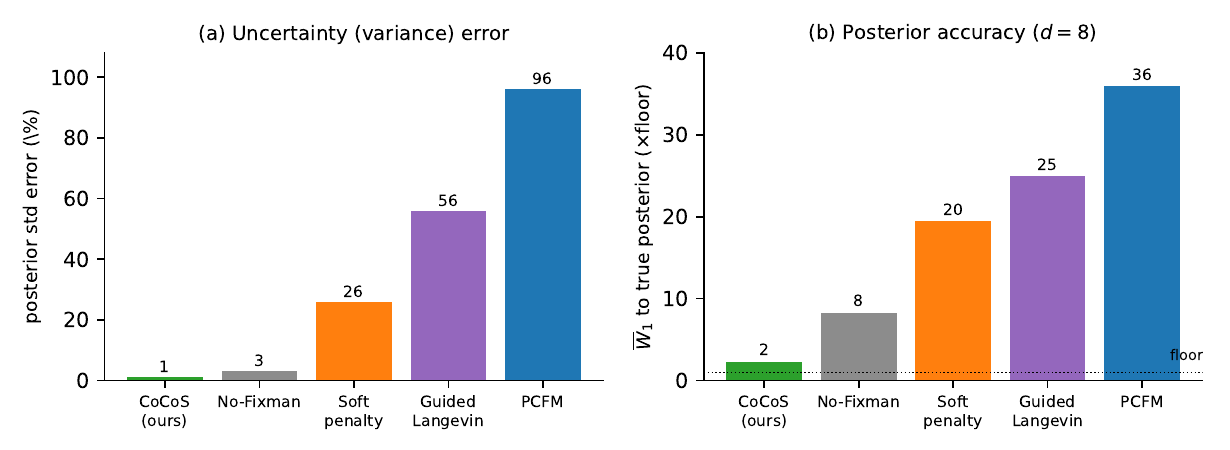}
\caption{\textbf{The bias is a uncertainty-quantification error.} On the $d{=}8$ Darcy problem (3-instance mean): (a)~relative error of the reported posterior standard deviation---CoCoS $1\%$ vs.\ PCFM $96\%$; (b)~$\overline{W}_1$ to the true posterior in units of the sampling-noise floor. Across paradigms (projection, guidance, soft penalty), enforcing the constraint without the co-area term inflates both the distance to the posterior and the error in the reported uncertainty.}
\label{fig:uqbars}
\end{figure}

\paragraph{The bias is structured by sensitivity (Fig.~\ref{fig:mech}).}
Figure~\ref{fig:mech} makes the mechanism explicit: binning samples by the constraint sensitivity $\sqrt{\det(\Jc\Jc^{\!\top})}$, the empirical density ratio of the no-Fixman (Hausdorff) sampler to the true posterior follows the predicted $\propto\sqrt{\det(\Jc\Jc^{\!\top})}$ law almost exactly, and PCFM exhibits the same sensitivity-structured over-/under-representation. The bias is not noise---it is the missing co-area Jacobian, recovered quantitatively. The same conclusion is visible at the level of posterior marginals (Fig.~\ref{fig:marg}): CoCoS overlays the arbiter, while projection is over-dispersed and shifted.

\begin{figure*}[t]
\centering
\includegraphics[width=0.92\textwidth]{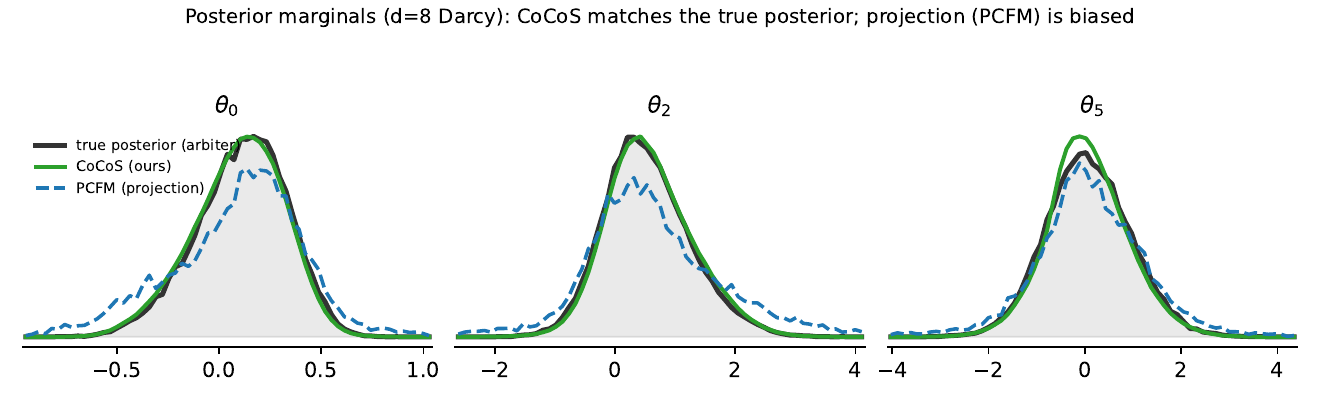}
\caption{\textbf{Posterior marginals ($d{=}8$ Darcy).} Three representative coordinates. CoCoS (green) matches the i.i.d.\ arbiter (shaded); minimal-displacement projection (PCFM, dashed) misplaces mass---over-dispersed tails and a depleted mode---i.e.\ a miscalibrated posterior, not a cosmetic shift.}
\label{fig:marg}
\end{figure*}

\paragraph{Amortization: distilling CoCoS into a fast flow (CoCo-Flow).}
We test the amortized realization of Sec.~\ref{sec:mcflow} on a controlled $d{=}4$ family with a scalar nonlinear observation $y$ (heterogeneous sensitivity). We generate measure-correct teacher samples (Prop.~\ref{prop:amortize}) and train a conditional flow-matching student $q_\phi(\thv\mid y)$; test-time inference is a single ODE solve. At held-out $y$, \textbf{CoCo-Flow} matches the rejection arbiter to $1.2$--$3.6\times$ the noise floor (mean ${\approx}2.3\times$), whereas a student trained \emph{identically} on \emph{projected} samples inherits the projection bias ($3.2$--$15\times$ the floor). The co-area correction survives amortization---and the measure of the training data is exactly what determines whether the fast generator is calibrated. The same recipe runs on the \emph{real} $d{=}8$ Darcy PDE: a conditional student $q_\phi(\thv\mid\yv)$ trained once on forward-simulated pairs \emph{generalizes across held-out observations} $\yv$, matching the per-$\yv$ rejection arbiter at $5.7\times$ the floor on average (5 held-out $\yv$), with one-time training ($\sim$90\,s) and $\sim$56\,ms per query---orders of magnitude faster than the minutes-per-query arbiter. A larger student would tighten this further, but it already demonstrates that the correction amortizes on a real PDE and transfers across the observation. The same holds on a higher-dimensional \emph{nonlinear} PDE: on $d{=}16$ steady viscous Burgers, CoCo-Flow generalizes across $5$ held-out $\yv$ at $5.0\times$ the floor ($4.1$--$5.8\times$), with $\sim$96\,s one-time training and $29$\,ms per query---confirming the amortized route runs and generalizes beyond the linear, low-dimensional setting.

\paragraph{A nonlinear PDE, and when the bias is small.}
On a $1$D steady viscous Burgers inverse problem with \emph{mild} sensitivity heterogeneity, the three samplers are nearly indistinguishable ($1.8$/$2.2$/$2.3\times$ the floor for CoCoS/no-Fixman/PCFM)---consistent with Corollary~\ref{cor:gap}: where $\det(\Jc\Jc^{\!\top})$ varies little, projection is approximately unbiased. The bias is a function of sensitivity heterogeneity, exactly as the theory predicts; it is largest on the strongly heterogeneous Darcy problems and negligible here.

\begin{figure*}[t]
\centering
\includegraphics[width=0.95\textwidth]{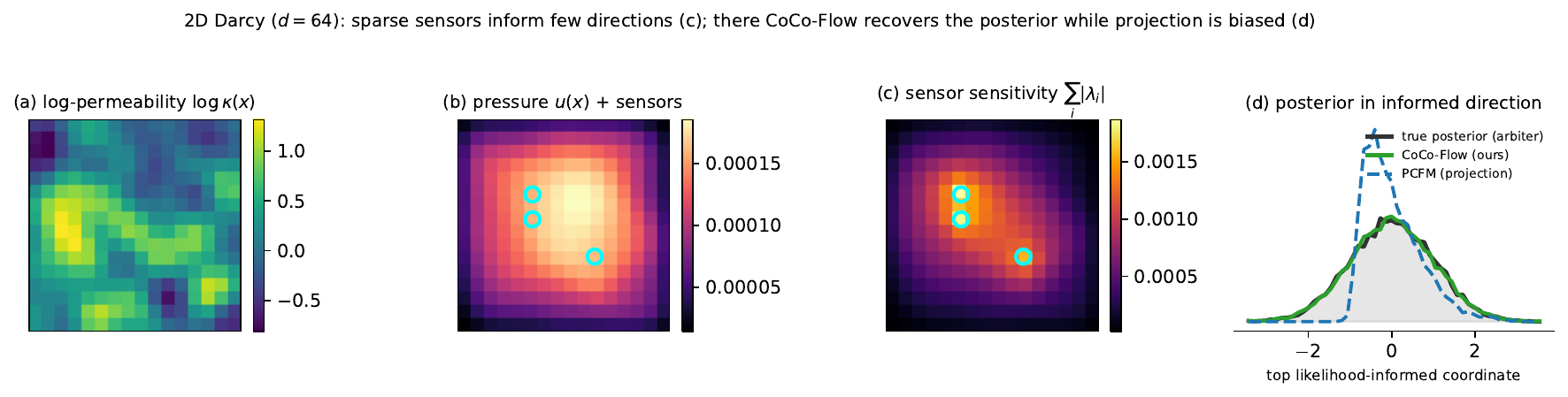}
\caption{\textbf{2D Darcy inverse problem ($d{=}64$).} (a)~a sample log-permeability field $\log\kappa(x)$; (b)~the pressure solution $u(x)$ with the $m{=}3$ sparse sensors (cyan); (c)~the observation-sensitivity field $\sum_i|\lambda_i|$ (adjoint states): the sensors inform only their neighbourhoods, so most of the field is prior-dominated. (d)~In the resulting likelihood-informed direction, the amortized CoCo-Flow (green) recovers the true posterior (shaded), while minimal-displacement projection (PCFM, dashed) is biased.}
\label{fig:fields}
\end{figure*}

\paragraph{Scaling to a 2D field-valued problem.}
To probe the high-dimensional, field-valued regime, we set up a $2$D Darcy inverse problem (Fig.~\ref{fig:fields}; $\nabla\!\cdot\!(\kappa\nabla u){=}1$ on a $16{\times}16$ grid, log-permeability with $d{=}64$ Fourier/KLE coefficients, $m{=}3$ sparse observations, codimension $3$), solved with a matrix-free preconditioned-CG forward whose adjoint Jacobian we verify against finite differences (relative error $1.5{\times}10^{-10}$). The quantity that governs the bias---the constraint sensitivity $\sqrt{\det(\Jc\Jc^{\!\top})}$---is here \emph{vastly} more heterogeneous than in 1D: it spans more than five orders of magnitude, varying by $\mathbf{2.6{\times}10^{5}\times}$ across the posterior (vs.\ $12$--$33\times$ at $d{=}8/16$). By Corollary~\ref{cor:gap} this is exactly the regime in which the co-area correction matters most. Measured over all $d{=}64$ coordinates the projection bias \emph{appears} small ($3\times$ the floor), but this is an artifact: with $m{=}3$ observations, ${\sim}61$ directions are prior-dominated (unidentified) and dilute the per-coordinate average. Restricting to the \emph{likelihood-informed subspace} \citep{cui2014likelihood}---the top eigenvectors of the posterior-averaged $\Jc^{\!\top}\Jc$, the directions the data actually constrains---makes the comparison clean (Fig.~\ref{fig:fields}d): the \emph{amortized} CoCo-Flow, trained once on forward-simulated pairs, \emph{recovers} the posterior at held-out observations ($0.9\times$ the floor over $4$ held-out $\yv$), whereas minimal-displacement projection (PCFM) is biased ($3.1\times$ the floor). The measure-correct posterior is thus delivered at field scale by a single forward pass per query---no per-query MCMC---with the exact sampler serving as the gold-standard verifier at the tractable dimensions where it matches the arbiter ($d{\le}16$, Table~\ref{tab:darcy}).

\begin{table*}[t]
\centering
\caption{Uncertainty quality on the $d{=}8$ Darcy problem (mean over $3$ instances), against the i.i.d.\ arbiter: posterior-mean error $\Norm{\hat\mu-\mu^\star}$, mean relative posterior-std error, and empirical coverage of nominal $90\%$ credible intervals (ideal $0.90$). One method per generative paradigm. Only CoCoS recovers calibrated uncertainty; projection and guidance get the posterior \emph{spread} badly wrong.}
\label{tab:uq}
\begin{tabular}{lcccc}
\toprule
Method & $\overline{W}_1$ & mean err & std rel.\ err & cov$_{90}$ \\
\midrule
Arbiter (self) & 0.009 & --- & --- & 0.90 \\
\textbf{CoCoS} (Fixman) & $\mathbf{0.022{\pm}0.013}$ & \textbf{0.06} & \textbf{0.01} & \textbf{0.89} \\
No-Fixman (Hausdorff) & $0.071{\pm}0.020$ & 0.27 & 0.03 & 0.90 \\
PCFM, projection & $0.32{\pm}0.11$ & 0.88 & \underline{0.96} & 0.95 \\
Guided Langevin, guidance & $0.22{\pm}0.09$ & 0.72 & 0.56 & 0.93 \\
Soft penalty $\gamma{=}0.02$ & $0.17{\pm}0.10$ & 0.62 & 0.26 & 0.92 \\
\bottomrule
\end{tabular}
\end{table*}

\begin{figure}[t]
\centering
\includegraphics[width=0.96\columnwidth]{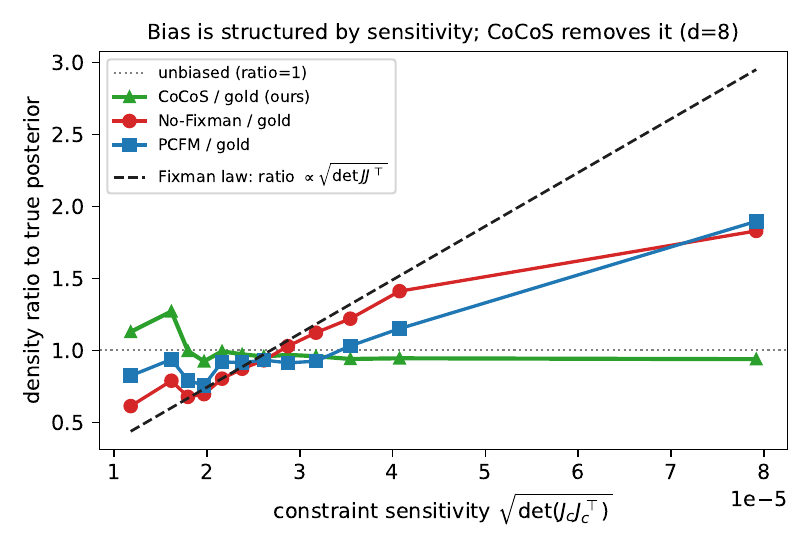}
\caption{\textbf{The bias is the missing co-area Jacobian---and CoCoS removes it.} Empirical density ratio to the true posterior vs.\ constraint sensitivity $\sqrt{\det(\Jc\Jc^{\!\top})}$ ($d{=}8$). The no-Fixman (Hausdorff) sampler and PCFM both over-represent high-sensitivity regions exactly along the predicted $\propto\sqrt{\det(\Jc\Jc^{\!\top})}$ law (dashed), while \textbf{CoCoS (green) lies flat on the unbiased line} (ratio $0.92$--$1.27$ vs.\ $0.6$--$1.9$ for the others)---the bias is recovered quantitatively and corrected by the Fixman term.}
\label{fig:mech}
\end{figure}

\section{Discussion and Limitations}
\paragraph{When does the bias bite?}
The deviation between projection/guidance and the true posterior scales with the spatial variation of the constraint sensitivity $\det(\Jc\Jc^{\!\top})$ and with the misalignment between the base model's off-manifold geometry and the residual directions. When sensitivity is homogeneous the bias vanishes; for PDEs---where sensitivity is dictated by the operator and the observation pattern---it is generically present and largest exactly where it matters most for UQ (poorly-constrained, high-sensitivity directions).
\paragraph{Rank deficiency and weak identifiability.}
Assumption~\ref{as:reg} (full-rank $\Jc$ on $\M$) is generic but, in sparse-observation PDE inverse problems, $\mathrm G=\Jc\Jc^{\!\top}$ is often \emph{near}-singular along weakly-identified directions, so $[\det\mathrm G]^{-1/2}$ can be large. Three points. (i) \emph{This is the correct Bayesian behaviour, not a pathology}: a direction the data barely constrains should be prior-dominated, and the co-area weight up-weights it by exactly the amount the residual tube widens there; the singularity is integrable when rank drops on a set of co-area--measure zero, and the finite-noise soft posterior $p_\gamma$ ($\gamma{>}0$, Eq.~\ref{eq:freeenergy}) is a proper density \emph{regardless} of rank---which is what CoCoS and the rejection arbiter actually sample. (ii) \emph{CoCoS's regularization does not bias the target.} The constrained step solves with $(\mathrm G+\epsilon\mathbf I)^{-1}$, but the Metropolis acceptance uses the \emph{exact} potential \eqref{eq:fixman} and a reverse-projection check, so $\epsilon$ affects only the \emph{proposal}, never the invariant law $p^\star$ (Thm.~\ref{thm:mcflow}); we additionally winsorize the rare near-singular $\log\det\mathrm G$ terms. This is a structural advantage over projection/guidance, where any such regularization \emph{directly} biases the reported samples. (iii) If one instead regularizes the \emph{target} (Tikhonov: $\mathrm G\!\to\!\mathrm G+\epsilon\mathbf I$ inside \eqref{eq:fixman}), one samples a controlled relaxation $p^\star_\epsilon$ with $p^\star_\epsilon\Rightarrow p^\star$ as $\epsilon\to0$---a quantifiable approximation, not a hidden one. Empirically our 2D Darcy is squarely in this regime---$\sqrt{\det\mathrm G}$ spans more than five orders of magnitude (median ${\sim}10^{-13}$, range up to ${\sim}2.6{\times}10^5$)---and the FD-verified Jacobian and the i.i.d.\ arbiter both remain stable.

\paragraph{Cost and scaling.}
The correction requires the constraint Jacobian $\Jc$ (already formed by projection-based methods) and an $m\times m$ log-determinant; for large codimension the latter can be estimated stochastically (Hutchinson). The amortized realization CoCo-Flow moves this cost to training time so that test-time inference is a single ODE solve; a full empirical study of the amortized generator---including conditioning across observation operators and scaling to field-valued $\thv$---is the natural next step.

\section{Conclusion}
Enforcing a PDE constraint is not the same as sampling the posterior. Hard-constraint generative methods condition on a measure-zero manifold and, by targeting the wrong zero-measure limit, omit a co-area/Fixman Jacobian that is necessary for correctness. We made the bias precise, validated it against an i.i.d.\ arbiter, and gave a measure-aware sampler that recovers the true posterior. For uncertainty-aware scientific inference, getting the measure right is not optional.

\bibliography{aaai2027}

@article{tresor2025resolution,
  title={Resolution of the Borel-Kolmogorov Paradox via the Maximum Entropy Principle},
  author={Tr{\'e}sor, Rapha{\"e}l and Lukashchuk, Mykola},
  journal={arXiv preprint arXiv:2509.24735},
  year={2025}
}

@article{wang2025source,
  title={Source-Guided Flow Matching},
  author={Wang, Zifan and Harting, Alice and Barreau, Matthieu and Zavlanos, Michael M and Johansson, Karl H},
  journal={arXiv preprint arXiv:2508.14807},
  year={2025}
}

@inproceedings{
baldan2026physics,
title={Physics vs Distributions: Pareto Optimal Flow Matching with Physics Constraints},
author={Giacomo Baldan and Qiang Liu and Alberto Guardone and Nils Thuerey},
booktitle={The Fourteenth International Conference on Learning Representations},
year={2026},
url={https://openreview.net/forum?id=tAf1KI3d4X}
}

@inproceedings{bastek2025physics,
  title={Physics-informed diffusion models},
  author={Bastek, Jan-Hendrik and Sun, WaiChing and Kochmann, Dennis},
  booktitle={International Conference on Learning Representations},
  volume={2025},
  pages={3360--3385},
  year={2025}
}

@article{cui2014likelihood,
  title={Likelihood-informed dimension reduction for nonlinear inverse problems},
  author={Cui, Tiangang and Martin, James and Marzouk, Youssef M and Solonen, Antti and Spantini, Alessio},
  journal={Inverse Problems},
  volume={30},
  number={11},
  pages={114015},
  year={2014},
  publisher={IOP Publishing}
}

@book{federer2014geometric,
  title={Geometric measure theory},
  author={Federer, Herbert},
  year={2014},
  publisher={Springer}
}

@article{fixman1974classical,
  title={Classical statistical mechanics of constraints: a theorem and application to polymers},
  author={Fixman, Marshall},
  journal={Proceedings of the National Academy of Sciences},
  volume={71},
  number={8},
  pages={3050--3053},
  year={1974}
}

@article{jiang2025ode,
  title={ODE-DPS: ODE-based diffusion posterior sampling for linear inverse problems in partial differential equation},
  author={Jiang, Enze and Peng, Jishen and Ma, Zheng and Yan, Xiong-Bin},
  journal={Journal of Scientific Computing},
  volume={102},
  number={3},
  pages={69},
  year={2025},
  publisher={Springer}
}

@inproceedings{kim2025flowdps,
  title={Flowdps: Flow-driven posterior sampling for inverse problems},
  author={Kim, Jeongsol and Kim, Bryan Sangwoo and Ye, Jong Chul},
  booktitle={Proceedings of the IEEE/CVF International Conference on Computer Vision},
  pages={12328--12337},
  year={2025}
}

@article{lelievre2012langevin,
  title={Langevin dynamics with constraints and computation of free energy differences},
  author={Lelievre, Tony and Rousset, Mathias and Stoltz, Gabriel},
  journal={Mathematics of computation},
  volume={81},
  number={280},
  pages={2071--2125},
  year={2012}
}

@inproceedings{ligauge,
  title={Gauge Flow Matching: Efficient Constrained Generative Modeling over General Convex Set and Beyond},
  author={Li, Xinpeng and Liang, Enming and Chen, Minghua},
  booktitle={The Fourteenth International Conference on Learning Representations},
  year={2026}
}

@article{parikh2026d,
  title={D-Flow SGLD: Source-Space Posterior Sampling for Scientific Inverse Problems with Flow Matching},
  author={Parikh, Meet Hemant and Chen, Yaqin and Wang, Jian-Xun},
  journal={arXiv preprint arXiv:2602.21469},
  year={2026}
}

@inproceedings{sherki2025combining,
  title={Combining Flow Matching and Transformers for Efficient Solution of Bayesian Inverse Problems},
  author={Sherki, Daniil and Oseledets, Ivan and Muravleva, Ekaterina},
  booktitle={AI4X 2025 International Conference},
  year={2025}
}

@article{utkarsh2026physics,
  title={Physics-constrained flow matching: Sampling generative models with hard constraints},
  author={Utkarsh, Utkarsh and Cai, Pengfei and Edelman, Alan and Gomez-Bombarelli, Rafael and Rackauckas, Christopher},
  journal={Advances in Neural Information Processing Systems},
  volume={38},
  pages={160217--160252},
  year={2026}
}

@article{yang2021b,
  title={B-PINNs: Bayesian physics-informed neural networks for forward and inverse PDE problems with noisy data},
  author={Yang, Liu and Meng, Xuhui and Karniadakis, George Em},
  journal={Journal of Computational Physics},
  volume={425},
  pages={109913},
  year={2021},
  publisher={Elsevier}
}

@article{zang2025dgenno,
  title={DGenNO: a novel physics-aware neural operator for solving forward and inverse PDE problems based on deep, generative probabilistic modeling},
  author={Zang, Yaohua and Koutsourelakis, Phaedon-Stelios},
  journal={Journal of Computational Physics},
  volume={538},
  pages={114137},
  year={2025},
  publisher={Elsevier}
}

@article{zappa2018monte,
  title={Monte Carlo on manifolds: sampling densities and integrating functions},
  author={Zappa, Emilio and Holmes-Cerfon, Miranda and Goodman, Jonathan},
  journal={Communications on Pure and Applied Mathematics},
  volume={71},
  number={12},
  pages={2609--2647},
  year={2018},
  publisher={Wiley Online Library}
}

@article{zhai2025conditional,
  title={Conditional Flow Matching for Bayesian Posterior Inference},
  author={Zhai, Percy S and Jeong, So Won and Ro{\v{c}}kov{\'a}, Veronika},
  journal={arXiv preprint arXiv:2510.09534},
  year={2025}
}

\appendix
\section{Proofs}
\label{app:proofs}
Throughout we assume Assumption~\ref{as:reg}. We write $\mathrm G=\Jc\Jc^{\!\top}$, $g=[\det\mathrm G]^{-1/2}$, and absorb the bounded likelihood $\ell(\yv\mid\cdot)$ into $\pi$ (set $\tilde\pi=\pi\,\ell$); since $\ell$ is continuous and bounded this does not affect any limit, so we suppress it and prove the statements for $\tilde\pi$, renamed $\pi$.

\paragraph{Lemma~\ref{lem:coarea} (co-area).}
This is Federer's co-area formula \citep{federer2014geometric} for the submersion $\cc$: since $\Jc$ has full rank $m$ on $\M$ (hence on a neighborhood), the co-area factor is $\mathcal J\cc=\sqrt{\det(\Jc\Jc^{\!\top})}$, so for integrable $f$,
\begin{equation}
\int_{\R^n} f\,d\xv=\int_{\R^m}\!\int_{\cc^{-1}(\bm z)} f\,(\mathcal J\cc)^{-1}\,d\Hd^{\,n-m}\,d\bm z.
\end{equation}
\hfill$\square$

\paragraph{Theorem~\ref{thm:coarea} (residual limit).}
Fix a bounded continuous test function $\varphi$. With $Z_\gamma=\int \pi\,e^{-\Norm{\cc}^2/2\gamma^2}d\xv$,
\begin{equation}
\int\varphi\,dp_\gamma=\frac1{Z_\gamma}\int_{\R^n}\varphi(\xv)\pi(\xv)e^{-\Norm{\cc(\xv)}^2/2\gamma^2}\,d\xv .
\end{equation}
Apply Lemma~\ref{lem:coarea} with $f=\varphi\pi\,e^{-\Norm{\cc}^2/2\gamma^2}$; on $\cc^{-1}(\bm z)$ the exponential equals $e^{-\Norm{\bm z}^2/2\gamma^2}$ and factors out of the inner integral:
\begin{equation}
\int\varphi\,dp_\gamma=\frac{\displaystyle\int_{\R^m} e^{-\Norm{\bm z}^2/2\gamma^2} F_\varphi(\bm z)\,d\bm z}{\displaystyle\int_{\R^m} e^{-\Norm{\bm z}^2/2\gamma^2} F_1(\bm z)\,d\bm z},
\end{equation}
where $F_\varphi(\bm z)=\int_{\cc^{-1}(\bm z)}\varphi\,\pi\, g\,d\Hd^{\,n-m}$.
By Assumption~\ref{as:reg} and the implicit function theorem, $\bm z\mapsto F_\varphi(\bm z)$ is continuous at $\bm 0$ with $F_\varphi(\bm 0)=\int_\M\varphi\pi g\,d\Hd^{\,n-m}$. Substituting $\bm z=\gamma\bm u$ gives, in both numerator and denominator, $\gamma^m\!\int e^{-\Norm{\bm u}^2/2}F_\bullet(\gamma\bm u)d\bm u$; dividing and letting $\gamma\to0$ (dominated convergence, using boundedness of $\varphi$ and local integrability of $F$) yields
$\int\varphi\,dp_\gamma\to F_\varphi(\bm0)/F_1(\bm0)=\int_\M\varphi\,dp^\star$, with $p^\star\propto\pi g\,d\Hd^{\,n-m}$. As $\varphi$ was arbitrary, $p_\gamma\Rightarrow p^\star$. \hfill$\square$

\paragraph{Proposition~\ref{prop:euclid} (Euclidean limit).}
For $\varepsilon$ smaller than the reach of $\M$, the nearest-point projection $\Pi:\mathcal T_\varepsilon\to\M$ is well defined and the normal exponential map is a diffeomorphism from the radius-$\varepsilon$ normal disk bundle onto $\mathcal T_\varepsilon$. Writing points as $\xv=\bm p+\bm\nu$, $\bm p\in\M$, $\bm\nu\in N_{\bm p}\M$, the Lebesgue volume element factorizes as $d\xv=\theta_\varepsilon(\bm p,\bm\nu)\,d\Hd^{m}(\bm\nu)\,d\Hd^{\,n-m}(\bm p)$ with Jacobian $\theta_\varepsilon=1+O(\varepsilon)$ (the Weyl tube expansion; the $O(\varepsilon)$ term is the trace of the second fundamental form against $\bm\nu$). Hence
\begin{multline}
\int_{\mathcal T_\varepsilon}\!\varphi\pi\,d\xv=\int_\M\!\Big(\int_{\Norm{\bm\nu}\le\varepsilon}\!\!\varphi\pi\,\theta_\varepsilon\,d\Hd^m(\bm\nu)\Big)d\Hd^{\,n-m}(\bm p)\\
=\omega_m\varepsilon^{m}\!\int_\M\!\varphi\pi\,d\Hd^{\,n-m}\big(1+O(\varepsilon)\big),
\end{multline}
where $\omega_m$ is the volume of the unit $m$-disk and we used continuity of $\varphi\pi$. The $\varepsilon$-independent constant $\omega_m\varepsilon^m$ cancels in the normalized conditional, so the limit is $p^{\mathrm E}\propto\pi\,d\Hd^{\,n-m}$, with no $g$ factor. \hfill$\square$

\paragraph{Corollary~\ref{cor:gap} (Fixman gap).}
Both $p^\star$ and $p^{\mathrm E}$ are absolutely continuous w.r.t.\ $\Hd^{\,n-m}|_\M$ with densities $\propto\pi g$ and $\propto\pi$ respectively; hence $dp^\star/dp^{\mathrm E}\propto g=[\det\mathrm G]^{-1/2}$. Two such normalized densities are equal $\Hd^{\,n-m}$-a.e.\ iff $g$ is a.e.\ constant on $\M$, i.e.\ $\det\mathrm G$ constant; if $\cc$ is affine then $\Jc$ is constant and this holds. \hfill$\square$

\paragraph{Proposition~\ref{prop:variational} (variational optimality).}
(i) Write $U(\xv)=\Norm{\cc(\xv)}^2/2\gamma^2\ge0$. For any $q$ with finite $\mathcal F_\gamma(q)$,
\[
\mathcal F_\gamma(q)=\mathrm{KL}(q\,\|\,\pi_y)+\E_q[U]=\mathrm{KL}\!\left(q\,\Big\|\,\tfrac{1}{Z_\gamma}\pi_y e^{-U}\right)-\log Z_\gamma,
\]
with $Z_\gamma=\int \pi_y e^{-U}<\infty$ (Gaussian-type tails). Since $\mathrm{KL}(\cdot\,\|\cdot)\ge0$ with equality iff the arguments coincide, the unique minimizer is $q=p_\gamma\propto\pi_y e^{-U}=\pi_y\exp(-\Norm{\cc}^2/2\gamma^2)$. (ii) is Theorem~\ref{thm:coarea}. Combining, $p^\star=\lim_{\gamma\to0}\arg\min_q\mathcal F_\gamma(q)$ weakly. \hfill$\square$

\paragraph{Proposition~\ref{prop:bias} (projection/guidance bias).}
Minimal-displacement projection \eqref{eq:proj} maps $\xv_0$ to the nearest point of $\M$, i.e.\ it is exactly the nearest-point projection $\Pi$ used in Prop.~\ref{prop:euclid}, collapsing each Euclidean normal fiber to its base point. For a prior whose off-manifold mass is concentrated within reach (spread $\varepsilon_\pi$), the same tube factorization gives $\Pi_\#\pi=\pi\,(1+O(\varepsilon_\pi))\,d\Hd^{\,n-m}=p^{\mathrm E}(1+O(\varepsilon_\pi))$. By Cor.~\ref{cor:gap}, $\Pi_\#\pi$ differs from $p^\star$ by $g$ up to $O(\varepsilon_\pi)$; the difference vanishes for all priors iff $\det\mathrm G$ is constant on the posterior support. Residual-guidance schemes that only enforce $\cc\!\approx\!\bm0$ without the $\tfrac12\log\det\mathrm G$ tilt sample (at stationarity) the Hausdorff law $p^{\mathrm E}$, so the same conclusion applies. \hfill$\square$

\paragraph{Theorem~\ref{thm:mcflow} (CoCoS).}
For a target $\mu\propto e^{-V}d\Hd^{\,n-m}$ on $\M$, the move ``sample $\bm v\sim\mathcal N(\bm0,s^2 P_T)$ in the tangent space; project to $\yv\in\M$ along $\Jc^{\!\top}$; accept with probability $\min\{1,e^{V(\xv)-V(\yv)}\,q(\bm v'\!\mid\yv)/q(\bm v\mid\xv)\}$ subject to the reverse move $\yv\!\to\!\xv$ existing'' is $\mu$-reversible; this is the constrained HMC/MALA-free scheme of \citet{zappa2018monte} (their Thm.~1), whose detailed-balance proof uses only that the proposal density and its reverse are computed with the same tangent Gaussian and that non-reversible proposals are rejected (line~6). The acceptance ratio in line~7 is precisely $V(\xv)-V(\yv)+\tfrac1{2s^2}(\Norm{\bm v}^2-\Norm{\bm v'}^2)$, i.e.\ the log of the above with $q$ the isotropic tangent Gaussian. Setting $V$ to \eqref{eq:fixman} gives $\mu=p^\star$ by Theorem~\ref{thm:coarea}. Thus $p^\star$ is invariant. Reversibility plus $p^\star$-irreducibility imply Harris recurrence on the connected component, and the ergodic theorem gives a.s.\ weak convergence of the empirical measure. Dropping the $\tfrac12\log\det\mathrm G$ term leaves the kernel reversible for $e^{-(-\log\pi)}d\Hd^{\,n-m}=p^{\mathrm E}$, the biased law. \hfill$\square$

\paragraph{Proposition~\ref{prop:amortize} (amortization).}
(i) Conditioned on the simulated $\yv$, the joint density of $\thv$ is $\propto\pi(\thv)\,\mathcal N(\yv;\Hd(\mathcal G(\thv)),\gamma^2\mathbf I)\propto\pi(\thv)e^{-\Norm{\cc(\thv)}^2/2\gamma^2}$ with $\cc(\thv)=\Hd(\mathcal G(\thv))-\yv$; this is exactly $p_\gamma$ of Theorem~\ref{thm:coarea}, so $\thv\mid\yv\Rightarrow p^\star(\cdot\mid\yv)$ as $\gamma\to0$. (ii) Conditional flow matching minimizes a Bregman/${L}^2$ objective whose unique population minimizer is the vector field whose flow pushes the source to the data law; with data law $p^\star$ the trained sampler has $p^\star$ as its unique optimum. \hfill$\square$

\section{HMC is an Unreliable Arbiter at Small $\gamma$}
\label{app:hmc}
We justify anchoring all comparisons to the i.i.d.\ rejection arbiter rather than to an MCMC ``gold standard.'' Table~\ref{tab:hmc} runs HMC ($8$ chains, leapfrog) on the $d{=}8$ Darcy soft posterior $p_\gamma$ at shrinking $\gamma$ and reports split-$\hat R$, effective sample size (ESS), and the $W_1$ distance from the HMC samples to the i.i.d.\ arbiter. As $\gamma\to0$ the posterior stiffens: ESS collapses by ${\sim}35\times$ ($737\to21$ of $24$k), $\hat R$ rises past the $1.1$ non-convergence threshold, and HMC drifts \emph{away} from the arbiter ($W_1$ grows $0.044\to0.113$). The rejection arbiter, being i.i.d., is unaffected. A stiff small-$\gamma$ HMC can therefore masquerade as agreement under loose tolerances and as disagreement under tight ones---hence our reliance on the i.i.d.\ arbiter.

\begin{table}[h]
\centering
\caption{HMC on the $d{=}8$ Darcy soft posterior at noise level $\gamma$ ($8$ chains $\times\,3$k post-burn samples). HMC degrades sharply as $\gamma\to0$; the i.i.d.\ arbiter does not.}
\label{tab:hmc}
\begin{tabular}{lcccc}
\toprule
$\gamma$ & accept & $\hat R$ & ESS\,/\,24k & $W_1$ to arbiter \\
\midrule
$0.01$  & 1.00 & 1.01 & 737 & 0.044 \\
$0.003$ & 1.00 & 1.05 & 101 & 0.066 \\
$0.001$ & 1.00 & \underline{1.23} & \underline{21} & \underline{0.113} \\
\bottomrule
\end{tabular}
\end{table}

\section{Experimental Details}
\label{app:exp}
All runs use \texttt{float64} on a single GPU; the i.i.d.\ arbiter is residual-band rejection $\{\Norm{\cc(\thv)}<\varepsilon\}$ with $\thv\sim\pi$.

\paragraph{Controlled $d{=}4$ benchmark.}
$\pi=\mathcal N(\bm0,\mathbf I)$; a single quadratic constraint with $\Norm{\nabla\cc}$ varying $4.2\times$; exact analytic gradients. Arbiter at the smallest band with a tight floor; CoCoS/ZHG with isotropic tangent step $s{=}0.2$, exact Metropolis with reverse-projection check.

\paragraph{1D Darcy ($d{=}8,16$).}
$-(\kappa u')'{=}1$ on $[0,1]$, $n{=}49$ nodes, $\kappa{=}\exp(\sum_k\theta_k S_k\cos k\pi x)$, $S_k{=}1.2/k$; $\pi{=}\mathcal N(\bm0,\mathbf I)$; $m{=}3$ observations at interior nodes; codim $3$. Forward = Thomas tridiagonal solve. Arbiter band $\varepsilon{=}0.003$ (band$\to0$ verified, App.\ above), $30$--$60$k accepted. CoCoS: $B{=}8000$ walkers, step $s{=}0.10$, $5000$ steps, burn-in $2000$, thin $10$, $\Pi$ via Gauss--Newton with $(\mathrm G{+}10^{-6}\mathbf I)^{-1}$ (pinv), reverse-projection tolerance $10^{-6}$. Three independent problem instances (seeds) for $d{=}8$ and $d{=}16$.

\paragraph{2D Darcy ($d{=}64$).}
$\nabla\!\cdot\!(\kappa\nabla u){=}1$ on a $16{\times}16$ interior grid, Dirichlet BC; $\kappa{=}\exp(\sum_k\theta_k\phi_k)$ with $d{=}64$ KLE-like 2D Fourier modes ($\phi_{pq}\propto(p^2{+}q^2)^{-3/4}\sin p\pi x\sin q\pi y$); $m{=}3$ sensors. Forward = matrix-free preconditioned CG (Jacobi preconditioner, tol $10^{-10}$); the constraint Jacobian uses the exact adjoint ($\lambda_i{=}A^{-1}\mathbf e_{o_i}$ via CG, then a face-sum), verified against finite differences to relative error $1.5{\times}10^{-10}$. Arbiter band $0.02$. The likelihood-informed subspace is the top-$6$ eigenvectors of the posterior-averaged $\Jc^{\!\top}\Jc$.

\paragraph{Nonlinear PDE (1D Burgers).}
$-\nu u''{+}uu'{=}s(x;\thv)$, $\nu{=}0.05$, $u(0){=}u(1){=}0$, $N{=}62$; source $s{=}\sum_k\theta_k S_k\cos k\pi x$, $S_k{=}0.5/k$ ($d{\in}\{8,16\}$); forward = $8$ batched Newton steps (tridiagonal). Arbiter band $0.02$--$0.03$.

\paragraph{HMC diagnostics.}
$8$ chains, leapfrog $L{=}20$, step sizes $\{0.015,0.006,0.0025\}$ for $\gamma{\in}\{0.01,0.003,0.001\}$, $4000$ iterations, $1000$ burn-in; split-$\hat R$ and integrated-autocorrelation ESS over $24$k post-burn samples.

\paragraph{CoCo-Flow (amortized).}
Conditional flow-matching student $v_\phi(\thv,t,\yv)$: a $3$-hidden-layer SiLU MLP (width $192$), trained by linear-interpolant flow matching ($x_t{=}(1{-}t)x_0{+}tx_1$, $x_0\sim\mathcal N(\bm0,\mathbf I)$, target $x_1{-}x_0$) for $12$k Adam steps (lr $10^{-3}$, batch $4096$) on a pre-computed pool of $1.5{\times}10^5$ forward-simulated pairs $(\thv,\yv)$, $\yv{=}\Hd(\mathcal G(\thv)){+}\mathcal N(\bm0,\gamma^2\mathbf I)$, $\gamma{=}0.02$. Sampling: Euler ODE, $60$ steps. Evaluated on held-out $\yv$ against per-$\yv$ rejection gold; the projection-trained baseline replaces $\thv$ by its projection onto $\{\Hd(\mathcal G){=}\yv\}$.

\paragraph{Arbiter and sampler diagnostics (Table~\ref{tab:ablation}).}
Table~\ref{tab:ablation} reports the arbiter's cost/accuracy trade-off on $d{=}8$ Darcy: as the band $\varepsilon$ shrinks, acceptance falls from $5.5\%$ to $3.4{\times}10^{-5}$ (raw samples scanned grow to $5.4{\times}10^{8}$), while the two-sample self-floor stays ${\approx}0.009$ until $\varepsilon{=}0.0015$, where it rises only because fewer samples are collected. The distance between \emph{successive} bands decreases from $0.052$ ($0.012{\to}0.006$) to $0.019$ ($0.003{\to}0.0015$), i.e.\ to the finite-sample floor---justifying $\varepsilon{=}0.003$ as a converged arbiter. The exact CoCoS chain ($d{=}8$, $B{=}4000$ walkers) accepts $91\%$ of moves, costs $0.30$\,s/step, with integrated autocorrelation ${\approx}258$ steps (ESS ${\approx}7.7$ per walker over $2$k post-burn steps, ${\sim}3{\times}10^{4}$ pooled), as expected for a random-walk constrained sampler; the amortized CoCo-Flow removes per-query MCMC altogether.

\begin{table}[h]
\centering
\caption{Residual-band rejection arbiter on $d{=}8$ Darcy: acceptance rate, samples collected, raw samples scanned, two-sample self-floor, and $\overline W_1$ to the next-tighter band (band$\to0$ convergence).}
\label{tab:ablation}
\begin{tabular}{lccccc}
\toprule
band $\varepsilon$ & accept & $N$ & raw & floor & $\to$next \\
\midrule
$0.024$  & $5.5{\times}10^{-2}$ & 40k & 4M   & 0.011 & 0.046 \\
$0.012$  & $1.1{\times}10^{-2}$ & 40k & 4M   & 0.009 & 0.052 \\
$0.006$  & $1.8{\times}10^{-3}$ & 40k & 24M  & 0.009 & 0.033 \\
$0.003$  & $2.6{\times}10^{-4}$ & 40k & 160M & 0.010 & 0.019 \\
$0.0015$ & $3.4{\times}10^{-5}$ & 18k & 540M & 0.018 & --- \\
\bottomrule
\end{tabular}
\end{table}

\end{document}